\documentclass[sigconf]{acmart}

\usepackage{amsmath,amssymb,amsfonts}
\usepackage{algorithm}
\usepackage{algorithmic}
\usepackage{booktabs}
\usepackage{multirow}
\usepackage{graphicx}
\usepackage{colortbl}
\usepackage{makecell}
\usepackage{indentfirst}
\definecolor{FedMGSRow}{gray}{0.94}
\newcommand{\bestcell}[1]{\begingroup\boldmath\textbf{#1}\endgroup}
\newcommand{\secondcell}[1]{\underline{#1}}
\newcommand{\accgain}[2]{\makecell{\bestcell{#1}\\[-1pt]{\scriptsize #2}}}

\acmYear{2026}
\acmConference[CIKM '26]{CIKM '26}{November 7--11, 2026}{Rome, Italy}
\acmBooktitle{Proceedings of CIKM '26, October 7--11, 2026, Rome, Italy}
\setcopyright{none}
\acmDOI{}
\acmISBN{}

\newcommand{\ourmethod}{FedMGS}

\begin{document}

\title[Towards Modality-imbalanced Federated Graph Learning: A Data Synthesis-based Approach]{\texorpdfstring{Towards Modality-imbalanced Federated\linebreak[4] Graph Learning: A Data Synthesis-based Approach}{Towards Modality-imbalanced Federated Graph Learning: A Data Synthesis-based Approach}}

\author{Zhengyu Wu, Hongchao Qin, Xunkai Li, Zekai Chen, Rong-Hua Li, Guoren Wang}
\renewcommand{\shortauthors}{Anonymous Author(s)}

\begin{abstract}
Recently, with the emergency of multimodal data, the graph ML community has undergone a profound data-centric paradigm shift. However, such multimodal graph data are often collected by disparate platforms, departments, or institutions, making centralized storage and processing prohibitive due to privacy regulations and potential competitive interests. Although MultiModal Federated Graph Learning (MM-FGL) offers a natural collaborative training paradigm, its practical deployment is challenged by two granularities of modality imbalance. Client-level imbalance occurs when certain clients lack entire modalities, while node-level imbalance occurs when individual nodes exhibit missing visual or textual attributes. While several relevant studies exist, our investigation reveals that they predominantly target graph-agnostic or centralized scenarios, rendering them difficult to adapt directly. To address these challenges, we formalize modality-imbalanced MM-FGL as an implicit graph-aware latent semantic representation synthesis problem. This paradigm recovers missing modal semantics directly within the representation space, thereby maximizing alignment with the original data's semantic distribution and mitigating the high variance induced by missing modalities. To this end, we propose FedMGS (Federated Modality-aware Graph Synthesis), which integrates three core components. The availability-aware graph encoder prevents missing modalities from contaminating local structural propagation. The prototype-guided latent semantic synthesizer establishes cross-client semantic anchors for unavailable modalities. The reliability-calibrated semantic fusion mechanism regulates the impact of recovered latent representations prior to predictive readout. Extensive experiments on four tasks show that FedMGS consistently outperforms competitive baselines with gains up to $17.41\%$ with best efficiency-performance tradeoff.

\end{abstract}

\begin{CCSXML}
<ccs2012>
<concept>
<concept_id>10002951.10003227.10003351</concept_id>
<concept_desc>Information systems~Data mining</concept_desc>
<concept_significance>500</concept_significance>
</concept>
<concept>
<concept_id>10010147.10010257.10010258.10010259</concept_id>
<concept_desc>Computing methodologies~Supervised learning</concept_desc>
<concept_significance>500</concept_significance>
</concept>
<concept>
<concept_id>10010147.10010919.10010172</concept_id>
<concept_desc>Computing methodologies~Distributed computing methodologies</concept_desc>
<concept_significance>300</concept_significance>
</concept>
</ccs2012>
\end{CCSXML}

\ccsdesc[500]{Information systems~Data mining}
\ccsdesc[500]{Computing methodologies~Supervised learning}
\ccsdesc[300]{Computing methodologies~Distributed computing methodologies}

\keywords{Federated Graph Learning; Multi-Modality Graph Learning; Modality Imbalance; Implicit Data Synthesis; Prototype Learning; Graph Neural Networks}

\maketitle

\section{Introduction}

Graphs, as a powerful data format for modeling complex systems characterized by structured relations, have garnered significant research interest. 
Graph Neural Networks learn node representations by propagating and transforming node attributes over graph neighborhoods~\cite{kipf2017gcn,velickovic2018gat,hamilton2017graphsage,xu2018gin,wu2020gnn_survey1,zhou2022gnn_survey2}. 
Their practical value has been demonstrated in successful implementations across realistic sceanrios, including recommendation system~\cite{Social_recomm,cai2023app_gnn_rec3}, biomedical discovery~\cite{bang2023app_gnn_bio1}, and financial security system~\cite{hyun2023app_gnn_fina2}. 
As these scenarios become multimodal, graph nodes increasingly carry heterogeneous profiles such as text and images, while edges encode contextual relations among entities. 
In practice, such multimodal graph datasets are often distributed across institutions in different geographical locations, which makes centralized storage and processing difficult by facing strict privacy regulations and operational constraints caused by competitive interests~\cite{wufederated2023fedapp_gnn_bio2,pan2022fedapp_gnn_fina1}.

Multimodal Federated Graph Learning (MM-FGL) serves as a natural collaborative paradigm for this distributed landscape, enabling clients to train on local multimodal subgraphs while a central server coordinates global model refinement. Nevertheless, a fundamental challenge persists as modality availability is often heterogeneous across institutions and individual dataholders. 
A client may store textual product descriptions while omitting visual data entirely, or data pipelines may fail or update asynchronously, causing sporadic modality loss at the node level. 
We characterize this phenomenon as \emph{two-granularity modality imbalance}, encompassing both client-level modality absence and node-level attribute sparsity. Although these two cases manifest at different scopes, both compel the model to perform sub-optimal inference under partial multimodal inputs.

Existing modality-related learning paradigms often focus on one aspect of this problem, making them inapplicable in more challenging and evolving realistic scenarios.
Centralized multimodal graph learning can exploit topology and heterogeneous attributes~\cite{wei2019mmgcn,tao2020mgat,zhu2025mmgraph,he2025unigraph2} jointly but assumes unified access to graph structures and modality features. 
Missing-modality multimodal FL methods recover or align incomplete samples under partial modality availability~\cite{feng2023fedmultimodal,che2024fedmvp,nguyen2024fedmac,peng2024fedmm,le2025mfcpl}, but they usually treat samples as independent multimodal entities and do not incorporate graph neighborhoods into the semantic completion process.
Consequently, modality-imbalanced MM-FGL still lacks a mechanism for recovering incomplete modality semantics while accounting for graph propagation and various downstream task adaptation.

The central difficulty is not merely that a modality value is absent, but that missing modality semantics alter representation learning through the federated graph pipeline. Three coupled failure mechanisms are especially consequential. \textbf{(1) Propagation Contamination from Invalid Modality Inputs:} When missing attributes are replaced by zero or placeholder vectors and then injected into message passing, the graph encoder may aggregate them as valid semantic evidence. The resulting noise can be diffused through local neighborhoods and further affect efficacy of the local training process. 
\textbf{(2) Cross-client Semantic Inaccessibility under Entire-modality Absence:} When a client lacks an entire modality throughout its subgraph, local training contains no observed samples from that modality and therefore cannot estimate its latent semantic distribution from local semantics alone, even if other clients contain useful modality-specific information due to varying topological structures.
\textbf{(3) Uncertain Reliability of Synthesized Latent Semantics:} Even when a missing latent representation can be synthesized, its usefulness is instance-dependent. The available structural, cross-modal, and semantic context may be mutually consistent for some nodes, yet incomplete, noisy, or ambiguous for others.  Therefore, modality-imbalanced MM-FGL calls for representation-space recovery that is graph-aware, cross-client, and uncertainty-calibrated.

To this end, we propose \ourmethod{} (Federated Modality-aware Graph Synthesis), a client-server framework for implicit graph-aware latent semantic synthesis. To address \textbf{Propagation Contamination}, \ourmethod{} first performs availability-aware graph encoding, where modality indicators gate local inputs so that unavailable attributes are excluded before message passing and cannot be propagated as valid semantic evidence. 
To mitigate \textbf{Cross-client Semantic Inaccessibility}, \ourmethod{} introduces prototype-guided latent semantic synthesis, which leverages local graph context together with federated class-modality prototypes acquired through federated collaboration to construct missing latent representations without transmitting raw features, edges, or node-level embeddings. 
To handle \textbf{Reliability Uncertainty}, \ourmethod{} further adopts reliability-calibrated semantic fusion, which preserves observed latent representations and adaptively controls the contribution of synthesized semantics before downstream evaluation. 
Through these three coordinated designs, \ourmethod{} turns modality imbalance from a raw missing-feature problem into a graph-aware, cross-client, and uncertainty-calibrated latent recovery process.
Empirically, \ourmethod{} achieves the strongest performance across all four evaluation tasks, with improvements of up to $17.41\%$ over competitive baselines while maintaining the most favorable efficiency-performance tradeoff. Further ablation results verify that the three core modules contribute complementary gains, and robustness analyses show that \ourmethod{} consistently preserves its advantage under severe modality-missing scenarios.

\paragraph{Our contributions.}
Accordingly, this paper makes three contributions that connect the problem formulation, method design, and evaluation.
\begin{enumerate}
    \item \textbf{New Perspective.} We formulate modality-imbalanced MM-FGL as a two-granularity incomplete-semantics problem. Missingness at the client and node levels jointly challenges graph propagation, semantic learning across clients, and prediction reliability.
    \item \textbf{New Method.} We propose \ourmethod{}, an implicit latent synthesis framework that combines availability-aware encoding, federated semantic anchors built only from observed entries, and reliability-calibrated fusion without transmitting raw graph data.
    \item \textbf{Versatile Evaluation.} We evaluate \ourmethod{} with graph-structured and modality-specific readouts, including node classification, link prediction, modality matching, and retrieval. The results report both improvements and boundary cases where synthesis should be conservative.
\end{enumerate}
\section{Related Work}

\subsection{(Multimodal) Federated Graph Learning}

FGL trains graph models over client subgraphs, with the server coordinating training across local clients. The FGL survey~\cite{fu2022fgl_survey_1} and position paper~\cite{zhang2021fgl_survey_2} summarize privacy, topology, and heterogeneity challenges, while FedGraphNN~\cite{he2021fedgraphnn} and FederatedScope-GNN~\cite{WangFedScope_22_fsg} provide benchmark systems. Representative methods address these challenges from different perspectives, as FedSage learns neighbor generators for missing cross-client neighbors~\cite{zhang2021fedsage}, FedGL builds global self-supervision to complement local graph information~\cite{chen2021fedgl}, GCFL groups clients by graph-induced training dynamics~\cite{xie2021gcfl}, FedPub personalizes subgraph federated learning~\cite{baek2022fedpub}, and FedStar separates structural knowledge from task-specific parameters for personalization~\cite{tan2023fedstar}. Server collaboration is also shaped by topology, with FGGP using class-wise prototypes under domain shift~\cite{wan2024fgl_fggp}, FedGTA using smoothing confidence and neighbor moments~\cite{li2024fedgta}, FedGCN studying convergence and communication tradeoffs~\cite{yao2024fgl_fedgcn}, FedTAD distilling topology-aware knowledge~\cite{zhu2024fedtad}, and AdaFGL adapting to topology variation~\cite{li2024adafgl}. These studies make topology a central object in FGL, but most of them still assume that each client's node attributes are directly usable before graph propagation begins. Our setting starts one step earlier because the model must decide how to construct usable semantic inputs when a modality is absent.

Multimodal graph learning studies how topology and heterogeneous node attributes jointly improve graph representation learning. MMGCN designs modality-specific graph convolution for micro-video recommendation~\cite{wei2019mmgcn}, MGAT uses multimodal attention for recommendation~\cite{tao2020mgat}, MMGraph benchmarks multimodal graph learning~\cite{zhu2025mmgraph}, and UniGraph2 learns a unified embedding space for multimodal graphs~\cite{he2025unigraph2}. These methods show that multimodal attributes and topology provide complementary semantic signals, but they rely on centralized access to both graph structures and modality features. When this problem moves into MM-FGL, MM-OpenFGL exposes client heterogeneity in modality, topology, and labels~\cite{li2026mmopenfgl}, and STAGE analyzes semantic drift during federated multimodal graph training~\cite{zhang2026stage}. What remains underdeveloped is a mechanism for graph-conditioned semantic construction when missing modalities occur at both client and node granularities. \ourmethod{} addresses this gap by making local neighborhoods and federated class-modality semantics part of the recovery condition itself.

\subsection{Modality-imbalanced Federated Learning}

Multimodal FL has studied clients with incomplete modality sets, and this line of work confirms that semantic recovery is often necessary under modality imbalance. FedMultimodal formalizes missing modalities as a benchmark challenge~\cite{feng2023fedmultimodal}, FedMVP transfers foundation-model knowledge for explicit modality completion~\cite{che2024fedmvp}, and FedMAC learns cross-modal aggregation and latent imputation under partial modalities~\cite{nguyen2024fedmac}. FedMM studies modality heterogeneity in computational pathology~\cite{peng2024fedmm}, MFCPL introduces cross-modal prototypes for severely missing modalities~\cite{le2025mfcpl}, and PEPSY controls embeddings under heterogeneous missing modalities~\cite{nguyen2025pepsy}. Together, these methods cover different points in the synthesis spectrum, from raw or feature-level completion to implicit representation control.

The graph setting changes the recovery problem because a node with missing semantics in one modality is embedded in a local topology, and its representation can affect neighboring nodes through message passing, node-pair scoring, matching, retrieval, and later federated updates. Treating each node as an independent multimodal sample therefore misses the propagation path through which a poor recovery can influence other predictions. \ourmethod{} adapts the recovery route to MM-FGL with availability-aware graph encoding, federated semantic prototypes for each modality, prototype aggregation from observed entries, and reliability-calibrated latent semantic fusion.
\section{Preliminaries}

\paragraph{MM-FGL Notation.}
We study multimodal federated graph learning in a system with $K$ clients and a central server. 
Throughout the paper, $\mathcal{M}={v,t}$ denotes the set of visual and textual modalities, $\mathcal{Y}={1,\ldots,C}$ denotes label indices or semantic-anchor indices when available, $\mathbf{x}$ denotes a vector, and $\mathbf{X}$ denotes a matrix.
Client $k$ owns a local multimodal-attributed graph $G_k=(V_k,E_k,\{\mathbf{X}_k^m\}_{m\in\mathcal{M}},\mathbf{Y}_k)$, where $V_k$ and $E_k$ are the local node and edge sets, $\mathbf{X}_k^m\in\mathbb{R}^{|V_k|\times d_m}$ stores modality-$m$ node features, and $\mathbf{Y}_k$ contains supervision when the readout or semantic anchors are available. For a downstream readout $\xi$, client $k$ forms task instances $\mathcal{T}_k^\xi$ from its local graph. We use $a_{k,i}^m\in\{0,1\}$ as the modality availability indicator, where $a_{k,i}^m=1$ means modality $m$ is observed for node $i$ on client $k$, and $a_{k,i}^m=0$ means it is unavailable. We omit the client subscript when the local client is clear, let $m'$ denote the complementary modality of $m$, define $\mathcal{B}_k=\{i:a_{k,i}^v=a_{k,i}^t=1\}$ as the complete-node set, and define $\mathcal{O}_{k,c}^{m}=\{i\in V_k:y_i=c,a_{k,i}^m=1\}$ as observed entries for class or anchor $c$ and modality $m$.

\paragraph{Two-granularity modality imbalance.}
In deployment, modality availability may differ before federated training begins because visual and textual attributes are produced by different collection pipelines, governed by different permissions, or stored under different platform policies. We represent this mismatch with a modality availability indicator $a_{k,i}^m\in\{0,1\}$, where $a_{k,i}^m=1$ means modality $m$ is observed for node $i$ on client $k$, and $a_{k,i}^m=0$ means it is unavailable. This indicator records real modality presence rather than a random training mask. Modality imbalance appears at two granularities, with client-level missingness meaning that a client lacks modality $m$ throughout its local subgraph, i.e., $a_{k,i}^m=0$ for all $i\in V_k$, and node-level missingness meaning that $a_{k,i}^m$ varies across nodes within a client. Both granularities create incomplete modality-specific semantics, but at different locations in the federated graph system.

\paragraph{Implicit latent synthesis and readouts.}
\ourmethod{} synthesizes latent semantic representations $\hat{\mathbf{z}}_i^m$, not raw features $\hat{\mathbf{x}}_i^m$, and does not add artificial nodes, edges, or persistent synthetic samples. The recovered semantic representations for nodes can be consumed by different readouts, including node classification, link prediction, modality matching, and modality retrieval.

\paragraph{Learning objective.}
For a downstream readout $\xi$, the learning process follows three steps. \textbf{Step 1, task-aware local prediction.} Each client $k$ forms task instances $\mathcal{T}_k^\xi$ from its local graph and optimizes the prediction loss specific to node classification, link prediction, modality matching, or modality retrieval. \textbf{Step 2, evidence-tied latent recovery.} The local objective $\mathcal{L}_k^\xi$ also regularizes recovered latent semantics through reconstruction on complete nodes $\mathcal{B}_k$ and alignment with class-modality prototypes $\bar{\mathbf{p}}_c^m$, so synthesis remains tied to observed evidence. \textbf{Step 3, sample-weighted federated optimization.} The server learns shared parameters $\theta$ by minimizing the sample-weighted objective
\begin{equation}
\min_{\theta}\sum_{k=1}^{K}
\frac{|\mathcal{T}_k^\xi|}{\sum_{j=1}^{K}|\mathcal{T}_j^\xi|}
\mathcal{L}_k^\xi(\theta),
\label{eq:federated_objective}
\end{equation}
where $\mathcal{L}_k^\xi$ is evaluated on $G_k$, $\mathbf{X}_k^v$, $\mathbf{X}_k^t$, optional supervision $\mathbf{Y}_k$, and the availability indicators $\mathbf{A}_k=\{a_{k,i}^m:i\in V_k,m\in\mathcal{M}\}$.

\paragraph{Communication constraints.}
Clients exchange model parameters and compact statistics with the server, and \ourmethod{} restricts these statistics to prototype banks $\bar{\mathbf{P}}^v,\bar{\mathbf{P}}^t$ for each modality, observation counts for each class, and scalar spreads between classes. Local features, topology, and embeddings for individual nodes are excluded from this payload.

\section{The \ourmethod{} Method}

\subsection{Overview}

The goal of \ourmethod{} is to make incomplete multimodal graph inputs usable before they reach prediction heads for downstream tasks. Although missingness at the client and node levels differs in where modality semantics disappear, both cases create the same modeling requirement that each node needs a graph-aware semantic representation built from observed modalities and carefully recovered missing semantics.

\ourmethod{} uses one pipeline for both missingness granularities. Availability-aware encoding first gates local inputs so that only observed modality semantics enter graph propagation. Prototype-guided synthesis then constructs missing latents from local graph context and federated class-modality prototypes when local modality input is unavailable. Reliability-calibrated fusion finally treats recovered latents as uncertain evidence and controls their influence on downstream readouts.

\ourmethod{} keeps synthesis implicit by constructing temporary latent representations rather than synthetic nodes, edges, images, texts, or persistent samples. This design preserves the privacy boundary by letting the server only receives model parameters, modality-specific class prototypes, observation counts, and sample counts. It aggregates parameters, refreshes prototype banks from valid observations, computes inter-class prototype spread as a reliability signal, and broadcasts updated statistics for the next round, as summarized by the client-server flow in Figure~\ref{fig:model architecture}.

\begin{figure*}[t]
  \includegraphics[width=0.998\textwidth]{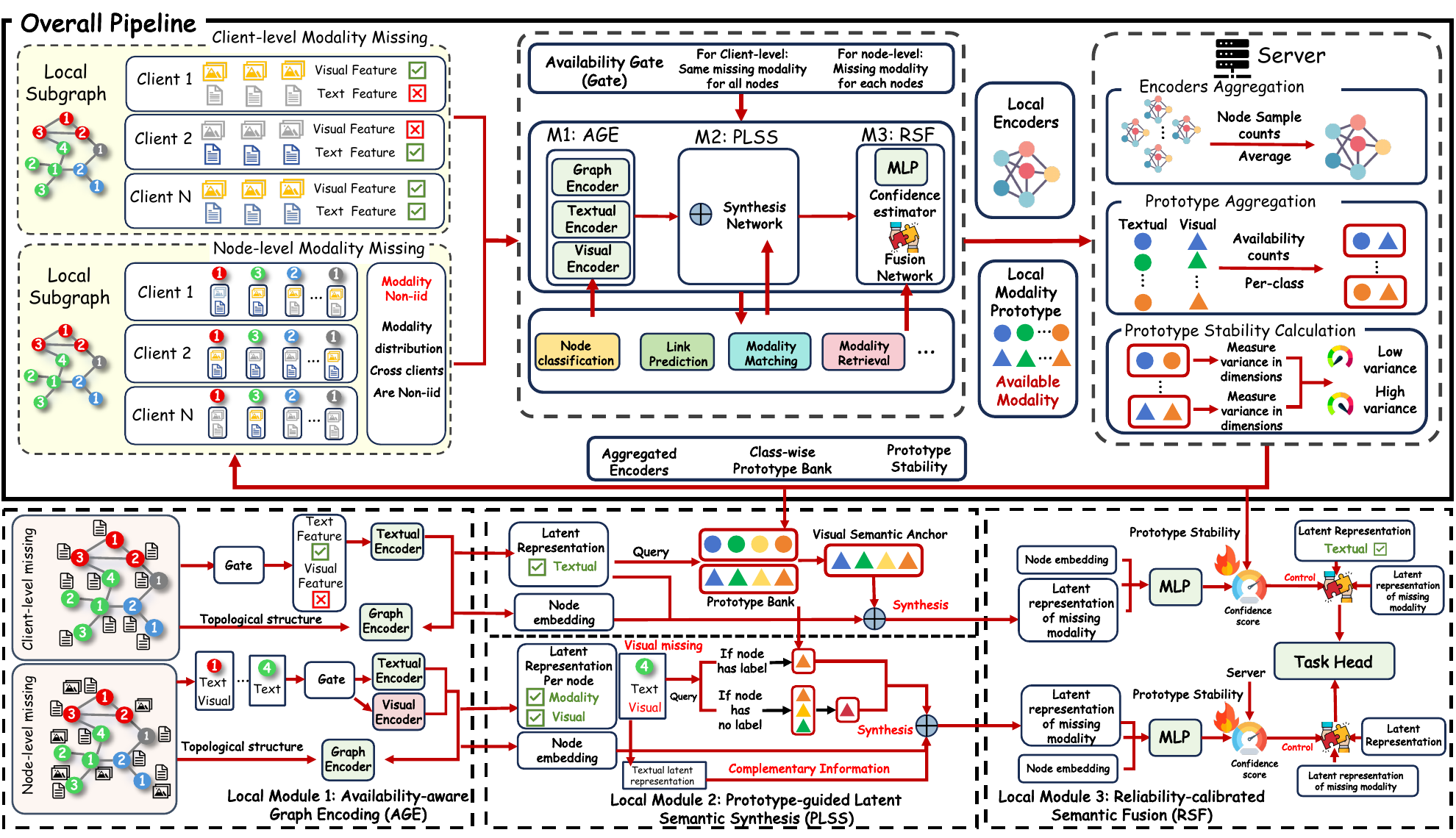}
  \captionsetup{skip=2.5pt, font={small,stretch=1}}
  \caption{Overview of FedMGS. The upper panel shows client-server training under Client-level and Node-level missingness. Each client applies availability gating, graph encoding, prototype-guided latent synthesis, and reliability-calibrated fusion before task learning, while the server aggregates parameters and class-modality prototypes. Lower panels detail the three local modules. Red arrows indicate training flow and flame icons denote learnable reliability scoring.}
\label{fig:model architecture}
\end{figure*}

\subsection{Availability-aware Graph Encoding}
\label{sec:availability-aware-graph-encoding}

To keep unavailable modalities out of local message passing, \ourmethod{} applies the modality availability indicator before propagation and constructs a graph-conditioned initial representation only from observed modality representations. This design mitigates the degraded case in which a missing image or text feature is replaced by a placeholder and then propagated to neighboring nodes as if it were genuine semantic inputs.
\begin{equation}
\begin{alignedat}{2}
\mathbf{z}_i^m
&=a_i^m e_m(\mathbf{x}_i^m),\; m\in\mathcal{M},
\quad&
\mathbf{s}_i
&=\frac{\sum_{m\in\mathcal{M}}\mathbf{z}_i^m}
{\sum_{m\in\mathcal{M}}a_i^m+\epsilon},\\
\mathbf{H}_k^{(0)}
&=[\mathbf{s}_i]_{i\in V_k},
&
\mathbf{H}_k^{(\ell+1)}
&=\operatorname{LN}\big(\operatorname{GNN}^{(\ell)}(\mathbf{H}_k^{(\ell)},E_k)\big).
\end{alignedat}
\label{eq:graph_semantic_seed}
\end{equation}
The graph context for each node is taken from the final propagated state.
\[
\mathbf{h}_i^g=(\mathbf{H}_k^{(L_g)})_i .
\]
Here $\operatorname{LN}$ denotes layer normalization, and $L_g$ is the number of graph propagation layers. The initial semantic representation $\mathbf{s}_i$ excludes unavailable modalities while normalizing representation magnitude by the number of observed modalities. This gating operation adapts to each node's modality pattern under node-level missingness and still extracts graph context from whichever modality remains available under client-level missingness. The final representation $\mathbf{h}_i^g$ is computed locally for each node, so topology can guide later synthesis without transmitting edges or node embeddings to the server.

\subsection{Prototype-guided Latent Semantic Synthesis}
\label{sec:prototype-guided-latent-semantic-synthesis}

To provide modality information that a client cannot derive locally, \ourmethod{} summarizes observed modality representations into class-level prototypes and coordinates these summaries through the server. For each modality $m$, the resulting prototype set provides a compact federated reference for class-specific semantics. 
A client without visual attributes, for example, may still obtain graph context and textual semantics for its nodes, but it has no local visual examples from which to learn a visual latent space. Since the modality-$m$ prototype set is estimated only from observed inputs, $\bar{\mathbf{p}}_c^m$ captures class-$c$ semantics in that modality space rather than collapsing into a generic class summary.

After local training, each client computes a prototype estimate by averaging representation of observed entries for every available class-modality pair, and the server aggregates only valid client-side prototype estimates to form modality-specific prototype sets:
\begin{equation}
\begin{aligned}
n_{k,c}^m
&=|\mathcal{O}_{k,c}^{m}|,\qquad
\mathbf{p}_{k,c}^{m}
=\frac{1}{n_{k,c}^{m}}\sum_{i\in\mathcal{O}_{k,c}^{m}} e_m(\mathbf{x}_i^m),\\
\bar{\mathbf{p}}_{c}^{m}
&=\frac{\sum_{k\in\mathcal{S}^{(t)}}n_{k,c}^{m}\mathbf{p}_{k,c}^{m}}
{\sum_{k\in\mathcal{S}^{(t)}}n_{k,c}^{m}+\epsilon},
\quad n_{k,c}^m>0,\; c\in\mathcal{Y},\;m\in\mathcal{M}.
\end{aligned}
\label{eq:proto_bank}
\end{equation}
This averaging step compresses local modality representations into compact client-side prototype estimates. The accompanying counts allow the server to distinguish unavailable class-modality entries from valid estimates computed from limited observations. Weighting by $n_{k,c}^{m}$ makes each global prototype reflect the amount of observed data supporting each client-side estimate, while excluding missing entries prevents placeholder values from attenuating the prototype for that modality.

The synthesizer constructs a latent representation for a missing modality by combining three sources with distinct roles. The complementary observed modality provides node-specific semantic information, the graph-context representation $\mathbf{h}_i^g$ encodes structural signals aggregated from the local neighborhood, and the prototype query $\boldsymbol{\pi}_i^m$ introduces class-aware modality semantics coordinated across clients. These conditioning terms are concatenated with an alignment placeholder and passed through the synthesis network as follows:
\begin{equation}
\begin{alignedat}{2}
\boldsymbol{\pi}_i^m
&=(\mathbf{q}_i)^\top\bar{\mathbf{P}}^m,
\qquad&
\hat{\mathbf{z}}_i^m
&=\psi_{\mathrm{syn}}^m(\mathbf{u}_i^m),
\\
\mathbf{u}_i^m
&=[\mathbf{z}_i^{m'}\;\|\;\mathbf{h}_i^g\;\|\;\boldsymbol{\pi}_i^m\;\|\;\mathbf{0}],
&\quad m&\in\mathcal{M}.
\end{alignedat}
\label{eq:proto_synthesis}
\end{equation}
For labeled training nodes, $\mathbf{q}_i$ is set to the one-hot encoding of $y_i$. For unlabeled or task-specific nodes, it is given by the current model's class-specific estimate. The prototype query $\boldsymbol{\pi}_i^m=(\mathbf{q}_i)^\top\bar{\mathbf{P}}^m$ therefore selects class-specific semantics from the modality-$m$ prototype set for node $i$. The zero vector in $\mathbf{u}_i^m$ keeps the input dimensions aligned across the two synthesis directions. Since $\hat{\mathbf{z}}_i^m$ is a latent semantic representation rather than a reconstructed raw feature, it can draw on modality-$m$ information aggregated from other clients under client-level missingness, while using the node's observed modality and local graph context to handle node-level missingness.

\subsection{Reliability-calibrated Semantic Fusion}
\label{sec:reliability-calibrated-semantic-fusion}

Since synthesis could not be trusted uniformly, \ourmethod{} treats recovered latents as uncertain semantic estimates rather than direct replacements. A recovered latent is useful when cross-modal semantics, graph context, and prototype-derived semantics are consistent, but it can be misleading when the local context is sparse or the prototype condition is sensitive to class ambiguity.

\ourmethod{} first estimates synthesis confidence for each node from the recovered modality and graph context, while the server tracks prototype stability over valid classes so that the two signals can jointly calibrate recovered representations.
\begin{equation}
\label{eq:synthesis_confidence}
    \eta_i^m
=\sigma\big(\psi_{\mathrm{conf}}^m([\hat{\mathbf{z}}_i^m\;\|\;\mathbf{h}_i^g])\big).
\end{equation}
The server computes stability only over classes with valid prototypes for the target modality.
\begin{equation}
\label{eq:prototype_spread}
\begin{aligned}
\mathcal{Y}_m^{(t)}
&=\{c\in\mathcal{Y}:\sum_{k\in\mathcal{S}^{(t)}}n_{k,c}^{m}>0\},\\
\Delta_m
&=\frac{1}{d}\sum_{r=1}^{d}
\operatorname{Var}\big(\{(\bar{\mathbf{p}}_{c}^{m})_r:c\in\mathcal{Y}_m^{(t)}\}\big).
\end{aligned}
\end{equation}

The node confidence and server-side prototype stability then define both the calibrated confidence and the candidate latent.
\begin{equation}
\label{eq:calibrated_candidate}
\tilde{\eta}_i^m=\eta_i^m\,\sigma(-\beta_m\Delta_m),
\qquad
\tilde{\mathbf{z}}_i^m
=\tilde{\eta}_i^m\hat{\mathbf{z}}_i^m+(1-\tilde{\eta}_i^m)\mathbf{h}_i^g .
\end{equation}
The final fusion rule uses the calibrated candidate only when modality $m$ is missing.
\begin{equation}
\mathbf{f}_i^m
=
\begin{cases}
\mathbf{z}_i^m, & a_i^m=1,\\
\tilde{\mathbf{z}}_i^m, & a_i^m=0,
\end{cases}
\quad m\in\mathcal{M}.
\label{eq:reliability_fusion}
\end{equation}
Here $\sigma(\cdot)$ is the sigmoid function and $\beta_m$ is a learnable scale. The stability $\Delta_m$ turns semantic dispersion between classes into a conservative signal, so the fusion rule relies less on synthesized content when prototype semantics are less stable. Eq.~\ref{eq:reliability_fusion} keeps observed modalities unchanged and, for missing modalities, selects between the synthesized latent and graph context by shifting weight toward local topology when synthesis is uncertain.

The calibrated modality representations are finally combined into node and modality-specific readout representations as follows: 
\begin{equation}
\mathbf{r}_i=\phi_{\mathrm{fuse}}([\mathbf{f}_i^v\;\|\;\mathbf{f}_i^t]),
\qquad
\mathbf{e}_i^m=\phi_m(\mathbf{f}_i^m),\;m\in\mathcal{M}.
\label{eq:readout_representations}
\end{equation}
Downstream readouts consume the calibrated representations according to the prediction target. Node classification predicts $\mathbf{o}_i=g_{\mathrm{cls}}(\mathbf{r}_i)$. Link prediction scores a node pair with $s_{ij}=g_{\mathrm{link}}(\mathbf{r}_i,\mathbf{r}_j)$, following GNN link prediction settings where pairwise structure is inferred from learned node embeddings~\cite{Zhang18link_prediction1}. Link prediction can also be formulated by transforming edges or candidate links into graph entities and applying graph neural networks over their induced relations~\cite{cai2021link_prediction2}. Modality matching compares $\mathbf{e}_i^v$ and $\mathbf{e}_i^t$ for the same node, while modality retrieval ranks candidates across modalities by similarities between $\mathbf{e}_i^v$ and $\mathbf{e}_j^t$. Because missingness is handled before prediction, these heads will not be affected.

\subsection{Federated Optimization}

Each client optimizes the task readout while keeping synthesis tied to observed evidence. Let $\xi$ index the readouts for downstream tasks. For prototype alignment, define $\bar{\mathbf{p}}_c=|\mathcal{M}|^{-1}\sum_{m\in\mathcal{M}}\bar{\mathbf{p}}_c^m$ and $\ell_{i,c}=\operatorname{sim}(\mathbf{s}_i,\bar{\mathbf{p}}_c)/\tau_p$, which lets the local objective combine task supervision, reconstruction on complete nodes, and prototype alignment. Specific formulas are as follows: 
\begin{equation}
\begin{aligned}
\mathcal{L}_{\mathrm{task}}^{k,\xi}
&=\text{readout loss for task }\xi,\\
\mathcal{L}_{\mathrm{rec}}^k
&=\frac{1}{|\mathcal{B}_k||\mathcal{M}|}
\sum_{i\in\mathcal{B}_k}\sum_{m\in\mathcal{M}}
\|\hat{\mathbf{z}}_i^m-\mathbf{z}_i^m\|_2^2,\\
\mathcal{L}_{\mathrm{proto}}^k
&=-\frac{1}{|\mathcal{T}_k^y|}\sum_{i\in\mathcal{T}_k^y}
\log\frac{\exp(\ell_{i,y_i})}{\sum_{c\in\mathcal{Y}}\exp(\ell_{i,c})},\\
\mathcal{L}_k^\xi
&=\mathcal{L}_{\mathrm{task}}^{k,\xi}
+\lambda_{\mathrm{rec}}\mathcal{L}_{\mathrm{rec}}^k
+\lambda_{\mathrm{proto}}\mathcal{L}_{\mathrm{proto}}^k .
\end{aligned}
\label{eq:local_objective}
\end{equation}
where $\operatorname{sim}(\cdot,\cdot)$ is cosine similarity, $\tau_p=\sqrt d$, and $\mathcal{T}_k^y$ denotes labeled training nodes when class supervision is available. For different readouts, the task term is instantiated as supervised cross-entropy loss for node classification, binary or ranking loss for link prediction, contrastive loss for modality matching, or retrieval loss for cross-modal ranking, so the same synthesis backbone can serve different objectives. Because naturally missing modalities do not provide ground-truth targets, reconstruction is restricted to complete nodes in $\mathcal{B}_k$. The prototype term aligns local semantic representations with their corresponding class prototypes and contrasts them against other class prototypes only when class supervision is available. Early rounds rely mainly on task and reconstruction losses, whereas later rounds can exploit more stable prototype sets as cross-client semantic references.

Algorithm~\ref{alg:fedmgs} presents the client-server training procedure of \ourmethod{}. Each round begins with the server distributing the current model parameters together with modality-specific prototype sets and inter-class spread statistics. Using only its local graph and attributes, each selected client performs availability-aware encoding, prototype-conditioned latent synthesis, reliability-calibrated fusion, and task-specific optimization. The client then returns updated parameters and valid class-modality prototype estimates with their observation counts. The server aggregates the received parameters, updates the modality-specific prototype sets from valid estimates, and recomputes spread statistics for the next communication round.

\begin{algorithm}[t]
\caption{\ourmethod{} Training Pipeline}
\label{alg:fedmgs}
\begin{algorithmic}[1]
\STATE \textbf{Input:} clients $\mathcal{K}$, local data $\{G_k,\mathbf{A}_k\}_{k\in\mathcal{K}}$, rounds $T$, local epochs $E$, readout $\xi$, loss weights $\lambda_{\mathrm{rec}},\lambda_{\mathrm{proto}}$
\STATE Initialize $\theta^{(0)}$, prototype banks $\{\bar{\mathbf{P}}^{m,(0)}\}_{m\in\mathcal{M}}$, prototype counts, and spreads $\{\Delta_m^{(0)}\}_{m\in\mathcal{M}}$
\FOR{$t=0$ to $T-1$}
    \STATE Server samples $\mathcal{S}^{(t)}$ and broadcasts $\theta^{(t)}$, $\{\bar{\mathbf{P}}^{m,(t)},\Delta_m^{(t)}\}_{m\in\mathcal{M}}$
    \FOR{each client $k\in\mathcal{S}^{(t)}$ in parallel}
        \STATE Set local parameters $\theta_k\leftarrow\theta^{(t)}$
        \FOR{$e=1$ to $E$}
            \STATE Compute $\mathbf{z}_i^m$, $\mathbf{s}_i$, and $\mathbf{h}_i^g$ from $G_k,\mathbf{A}_k$ by Eq.~\ref{eq:graph_semantic_seed}
            \STATE Build $\mathbf{q}_i$ from labels or semantic anchors when available, and from current predictions otherwise
            \STATE Query $\boldsymbol{\pi}_i^m$ and synthesize $\hat{\mathbf{z}}_i^m$ for each $m\in\mathcal{M}$ by Eq.~\ref{eq:proto_synthesis}
            \STATE Estimate $\eta_i^m$ by Eq.~\ref{eq:synthesis_confidence} and calibrate $\tilde{\eta}_i^m,\tilde{\mathbf{z}}_i^m$ by Eq.~\ref{eq:calibrated_candidate}
            \STATE Fuse $\mathbf{f}_i^m$ by Eq.~\ref{eq:reliability_fusion} and build $\mathbf{r}_i,\mathbf{e}_i^m$ by Eq.~\ref{eq:readout_representations}
            \STATE Update $\theta_k$ using readout $\xi$ and the local objective in Eq.~\ref{eq:local_objective}
        \ENDFOR
        \STATE Compute observed-only $\{\mathbf{p}_{k,c}^m,n_{k,c}^m\}_{c,m}$ by Eq.~\ref{eq:proto_bank}
        \STATE Upload $\theta_k$, $\{\mathbf{p}_{k,c}^m,n_{k,c}^m\}_{c,m}$, and $|\mathcal{T}_k^\xi|$
    \ENDFOR
    \STATE Aggregate $\{\theta_k\}_{k\in\mathcal{S}^{(t)}}$ by sample-weighted averaging following Eq.~\ref{eq:federated_objective}
    \STATE Refresh $\{\bar{\mathbf{P}}^{m,(t+1)}\}_{m\in\mathcal{M}}$ by Eq.~\ref{eq:proto_bank} and recompute $\{\Delta_m^{(t+1)}\}_{m\in\mathcal{M}}$ by Eq.~\ref{eq:prototype_spread}
\ENDFOR
\STATE \textbf{Return:} global model $\theta^{(T)}$ and federated prototype banks $\{\bar{\mathbf{P}}^{m,(T)}\}_{m\in\mathcal{M}}$
\STATE \textbf{Note:} raw features, edges, and node-level embeddings are never uploaded.
\end{algorithmic}
\end{algorithm}

\paragraph{Communication cost.}
Beyond standard model parameters, each client sends modality-specific class prototypes, their observation counts, and one task-sample count per round. This additional payload grows only with the number of modalities, classes, and prototype dimensions, rather than with local nodes or edges, which is important for graph clients with large subgraphs. Communication is therefore limited to model parameters and aggregate prototype statistics, without transmitting raw data.
\section{Theoretical Analysis}

We analyze latent semantic representation construction rather than raw data generation, and the results support three design choices in \ourmethod{}. Specifically, synthesis should be conditioned on graph context and prototype semantics, prototypes should be aggregated only from observed modality entries, and recovered latent representations should be calibrated before they influence the readout. Missingness at the client and node levels differs in the structure of missing semantics for each modality, but both cases induce missing target latents whose recovery can benefit from graph and prototype conditions.

\paragraph{Notation.}
For a fixed target modality $m\in\mathcal{M}$, let $\mathbf{Z}^{m}$ be the target missing-modality latent, $\mathbf{Z}^{m'}$ the complementary modality representation, $\mathbf{G}$ the graph-context representation, and $\boldsymbol{\Pi}^{m}$ the federated prototype query. In \ourmethod{}, these random variables correspond to $\mathbf{z}_i^m$, $\mathbf{z}_i^{m'}$, $\mathbf{h}_i^g$, and $\boldsymbol{\pi}_i^m$ at the node level. For a conditioning set $\mathcal{U}$, define the oracle reconstruction risk under squared loss as
\begin{equation}
R_m(\mathcal{U})=\inf_{\phi}\mathbb{E}\|\mathbf{Z}^{m}-\phi(\mathcal{U})\|_2^2,
\label{eq:oracle_risk}
\end{equation}
where the infimum is over measurable predictors with finite second moment.

\begin{theorem}[Oracle synthesis risk under additional conditions]
Assume $\mathbf{Z}^{m}$ has finite second moment and use the oracle risk in Eq.~\ref{eq:oracle_risk}.
\begin{equation}
R_m(\mathbf{Z}^{m'},\mathbf{G},\boldsymbol{\Pi}^{m})
\le R_m(\mathbf{Z}^{m'}).
\end{equation}
\end{theorem}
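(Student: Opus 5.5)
The plan is a monotonicity argument for the infimum. Enlarging the conditioning set can only enlarge the class of admissible predictors, so the oracle risk cannot increase. Concretely, write $\mathcal{F}(\mathcal{U})$ for the set of measurable functions of the variables in $\mathcal{U}$ with finite second moment, so that $R_m(\mathcal{U})=\inf_{\phi\in\mathcal{F}(\mathcal{U})}\mathbb{E}\|\mathbf{Z}^{m}-\phi(\mathcal{U})\|_2^2$.

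The key step is the embedding $\mathcal{F}(\mathbf{Z}^{m'})\hookrightarrow\mathcal{F}(\mathbf{Z}^{m'},\mathbf{G},\boldsymbol{\Pi}^{m})$.

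1. Fix any measurable $\phi$ with $\mathbb{E}\|\phi(\mathbf{Z}^{m'})\|_2^2<\infty$.
2. Define $\tilde\phi(\mathbf{z},\mathbf{g},\boldsymbol{\pi})=\phi(\mathbf{z})$, which ignores the extra arguments.
3. $\tilde\phi$ is measurable, since it is the composition of $\phi$ with the coordinate projection, which is measurable for the product $\sigma$-algebra.
4. $\tilde\phi(\mathbf{Z}^{m'},\mathbf{G},\boldsymbol{\Pi}^{m})=\phi(\mathbf{Z}^{m'})$ almost surely. It therefore has the same finite second moment and the same squared-loss risk.
5. Hence $R_m(\mathbf{Z}^{m'},\mathbf{G},\boldsymbol{\Pi}^{m})\le \mathbb{E}\|\mathbf{Z}^{m}-\phi(\mathbf{Z}^{m'})\|_2^2$ for every such $\phi$.
6. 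Taking the infimum over $\phi$ gives the claim.

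Finiteness of the second moment of $\mathbf{Z}^{m}$ guarantees both risks are finite rather than $+\infty$. The constant predictor $\phi\equiv 0$ is admissible and gives risk $\mathbb{E}\|\mathbf{Z}^m\|_2^2<\infty$, so the inequality is between real numbers.

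As a complementary viewpoint, I would also give the $L^2$-projection form, since it identifies the gap. Because $\mathbb{E}\|\mathbf{Z}^m\|_2^2<\infty$, the infimum over $\sigma(\mathcal{U})$-measurable predictors is attained by the conditional expectation $\mathbb{E}[\mathbf{Z}^m\mid\mathcal{U}]$. Write $\mathcal{A}=\sigma(\mathbf{Z}^{m'})\subseteq\mathcal{C}=\sigma(\mathbf{Z}^{m'},\mathbf{G},\boldsymbol{\Pi}^m)$. The tower property and Pythagoras in $L^2$ give
\[
R_m(\mathbf{Z}^{m'})-R_m(\mathbf{Z}^{m'},\mathbf{G},\boldsymbol{\Pi}^{m})=\mathbb{E}\big\|\mathbb{E}[\mathbf{Z}^m\mid\mathcal{C}]-\mathbb{E}[\mathbf{Z}^m\mid\mathcal{A}]\big\|_2^2\ge 0.
\]
The gap is strictly positive exactly when graph context and prototypes carry information about $\mathbf{Z}^m$ beyond $\mathbf{Z}^{m'}$.

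There is no serious obstacle. The only point needing care is measurability and integrability bookkeeping: the infimum must range over predictors measurable with respect to the joint conditioning variables, and extending $\phi$ must preserve finite second moment. Both are immediate from the projection construction. If $\boldsymbol{\Pi}^m$ depends on model predictions, as when $\mathbf{q}_i$ is estimated, it is still just an extra random variable, and the argument is unaffected.
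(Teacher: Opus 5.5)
Your proposal is correct and follows the paper's argument: any predictor using only $\mathbf{Z}^{m'}$ remains feasible under the enlarged conditioning set $(\mathbf{Z}^{m'},\mathbf{G},\boldsymbol{\Pi}^{m})$, so taking the infimum over the larger class can only lower the risk. Your $L^2$-projection identity for the gap is a correct refinement that the paper does not state, though strict positivity requires that the extra variables change the conditional mean $\mathbb{E}[\mathbf{Z}^{m}\mid\cdot]$, not merely that they carry some information about $\mathbf{Z}^{m}$.
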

\begin{proof}
For squared loss, the optimal predictor given $\mathcal{U}$ is $\mathbb{E}[\mathbf{Z}^{m}\mid\mathcal{U}]$. Because $\mathbf{Z}^{m'}$ is contained in $(\mathbf{Z}^{m'},\mathbf{G},\boldsymbol{\Pi}^{m})$, every predictor using only $\mathbf{Z}^{m'}$ is feasible under the larger conditioning set. Taking the infimum over this larger predictor class gives the claim.
\end{proof}
For \ourmethod{}, this result motivates using graph context and prototype queries as synthesis conditions. Graph context is especially important for missingness at the node level, where available semantics vary across neighboring nodes. Prototype semantics are especially important for missingness at the client level, where a client may lack an entire modality and therefore needs semantic anchors across clients. Because the analysis concerns latent semantic representations for nodes, its implication is independent of whether the readout is node-wise, pairwise, or cross-modal.

\begin{proposition}[Bias of placeholder-based prototype aggregation]
For a fixed class $c$ and modality $m$, let $b_{k,c}^m\in\{0,1\}$ indicate whether client $k$ has a valid observed prototype for $(c,m)$, with $\mathbb{P}(b_{k,c}^m=1)=q_c^m$. Let $\mathcal{I}_{c}^{m}=\{k:b_{k,c}^m=1\}$, let $\mathbf{u}_{k,c}^m$ be the client prototype when $b_{k,c}^m=1$, and assume $\mathbb{E}[\mathbf{u}_{k,c}^m\mid b_{k,c}^m=1]=\boldsymbol{\mu}_c^m$. The two prototype estimators used to compare placeholder-based and observed-only aggregation are
{\small
\begin{equation}
\begin{alignedat}{2}
\widehat{\mathbf{p}}_{\mathrm{plh},c}^{m}
&=K^{-1}\sum_{k=1}^{K} b_{k,c}^m\mathbf{u}_{k,c}^m,
\qquad&
\widehat{\mathbf{p}}_{\mathrm{obs},c}^{m}
&=\frac{1}{|\mathcal{I}_{c}^{m}|}
\sum_{k\in\mathcal{I}_{c}^{m}}\mathbf{u}_{k,c}^m
\end{alignedat}
\end{equation}
}
satisfy $\mathbb{E}[\widehat{\mathbf{p}}_{\mathrm{plh},c}^{m}]=q_c^m\boldsymbol{\mu}_c^m$ and $\mathbb{E}[\widehat{\mathbf{p}}_{\mathrm{obs},c}^{m}\mid |\mathcal{I}_{c}^{m}|>0]=\boldsymbol{\mu}_c^m$.
\end{proposition}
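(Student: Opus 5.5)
The plan is to prove the two identities separately, each by linearity of expectation plus the tower property, conditioning on the indicator pattern $\mathbf{b}=(b_{1,c}^m,\ldots,b_{K,c}^m)$. Fix $(c,m)$ throughout and drop these indices when clear. The one implicit modelling assumption I will make explicit is that the conditional mean of $\mathbf{u}_k$ given availability does not depend on the other clients' indicators:
\[
\mathbb{E}[\mathbf{u}_k\mid \mathbf{b}]=\boldsymbol{\mu}
\quad\text{whenever } b_k=1 .
\]
This holds, for instance, if the clients' pairs $(b_k,\mathbf{u}_k)$ are independent across $k$, which is the natural reading of the statement. I will also note that $b_k\mathbf{u}_k$ is well defined as $\mathbf{0}$ when $b_k=0$, which is exactly the placeholder convention.

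For the placeholder estimator, linearity gives
\[
\mathbb{E}[\widehat{\mathbf{p}}_{\mathrm{plh}}]=K^{-1}\sum_k \mathbb{E}[b_k\mathbf{u}_k].
\]
Each term splits on the event $\{b_k=1\}$:
\[
\mathbb{E}[b_k\mathbf{u}_k]=\mathbb{P}(b_k=1)\,\mathbb{E}[\mathbf{u}_k\mid b_k=1]=q\boldsymbol{\mu}.
\]
Summing over $k$ and dividing by $K$ yields $q\boldsymbol{\mu}$. This is a one-line computation and needs no independence. I will remark that whenever $q<1$ and $\boldsymbol{\mu}\neq\mathbf{0}$, the estimator is shrunk toward the placeholder, and this shrinkage is the bias the proposition refers to.

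For the observed-only estimator, I condition on $\mathbf{b}$ with $|\mathcal{I}|>0$. Given $\mathbf{b}$, the set $\mathcal{I}$ is deterministic, so
\[
\mathbb{E}[\widehat{\mathbf{p}}_{\mathrm{obs}}\mid\mathbf{b}]=|\mathcal{I}|^{-1}\sum_{k\in\mathcal{I}}\mathbb{E}[\mathbf{u}_k\mid\mathbf{b}]=|\mathcal{I}|^{-1}\cdot|\mathcal{I}|\,\boldsymbol{\mu}=\boldsymbol{\mu}.
\]
Averaging this constant over all patterns with $|\mathcal{I}|>0$, via the tower property on the event $\{|\mathcal{I}|>0\}$, gives $\mathbb{E}[\widehat{\mathbf{p}}_{\mathrm{obs}}\mid|\mathcal{I}|>0]=\boldsymbol{\mu}$.

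The main obstacle is justifying $\mathbb{E}[\mathbf{u}_k\mid\mathbf{b}]=\boldsymbol{\mu}$ from the stated hypothesis, which only conditions on $b_k=1$. Conditioning on the full pattern, which the random denominator $|\mathcal{I}|$ forces, requires that the other indicators carry no information about $\mathbf{u}_k$. I will handle this by stating independence across clients, or more weakly this conditional-mean condition, as the interpretation of the assumption. Finite first moments of $\mathbf{u}_k$ are also needed so that all expectations exist.
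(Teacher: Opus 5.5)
Your proof is correct and takes the same route as the paper: linearity of expectation with $\mathbb{E}[b_k\mathbf{u}_k]=q\boldsymbol{\mu}$ for the placeholder estimator, and conditioning on the observed set $\mathcal{I}$ followed by the tower property for the observed-only estimator. The condition you make explicit, $\mathbb{E}[\mathbf{u}_k\mid\mathbf{b}]=\boldsymbol{\mu}$ on $\{b_k=1\}$, is exactly what the paper's conditioning step uses without saying so, and it is strictly stronger than the stated hypothesis, because correlation between $\mathbf{u}_k$ and other clients' indicators can bias the observed-only mean, so stating it (for example via independence across clients) is the right call.
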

\begin{proof}
For placeholder-based aggregation, each missing client contributes no observed prototype while still being counted in the denominator. Since each summand has mean $q_c^m\boldsymbol{\mu}_c^m$, averaging over $K$ clients gives the first identity and shows that the prototype is attenuated when $q_c^m<1$. Conditioning on the observed set $\mathcal{I}_{c}^{m}$, $\widehat{\mathbf{p}}_{\mathrm{obs},c}^{m}$ is the sample mean of valid observed prototypes, whose conditional expectation is $\boldsymbol{\mu}_c^m$ whenever at least one valid prototype exists.
\end{proof}
\begin{table*}[t]
\caption{Dataset statistics and evaluation settings. The table reports processed graph statistics, evaluation metrics, task types, and the missingness scenario used for each dataset.}
\label{tab:dataset-summary}
\centering
\scriptsize
\setlength{\tabcolsep}{2.8pt}
\renewcommand{\arraystretch}{1.08}
\resizebox{\textwidth}{!}{%
\begin{tabular}{@{}cccccccccc@{}}
\toprule
\textbf{Dataset} & \textbf{Domain} & \textbf{\#Nodes} & \textbf{\#Edges} & \textbf{\#Classes} & \textbf{Modalities} & \textbf{Split} & \textbf{Metric} & \textbf{Missingness Setting} & \textbf{Task} \\
\midrule
Movies & E-Commerce & 16,672 & 148,992 & 20 & Text+Visual & 60/20/20 & Acc. & Client-level & Node Classification \\
Grocery & E-Commerce & 17,074 & 121,724 & 20 & Text+Visual & 60/20/20 & Acc. & Client-level & Node Classification \\
Toys & E-Commerce & 20,695 & 106,058 & 18 & Text+Visual & 60/20/20 & R@5 & Client-level & Modality Retrieval \\
Flickr30k & Image Networks & 31,783 & 181,151 & - & Text+Visual & 60/20/20 & R@5 & Client-level & Modality Retrieval \\
DY & Video Recommendation & 8,299 & 35,627 & - & Text+Visual & 60/20/20 & AUC & Node-level & Link Prediction \\
Bili Dance & Video Recommendation & 2,307 & 9,127 & - & Text+Visual & 60/20/20 & AUC & Node-level & Link Prediction \\
KU & Video Recommendation & 5,370 & 22,052 & - & Text+Visual & 60/20/20 & AUC & Node-level & Modality Match \\
Bili Food & Video Recommendation & 1,579 & 6,544 & - & Text+Visual & 60/20/20 & AUC & Node-level & Modality Match \\
\bottomrule
\end{tabular}
}
\end{table*}
This formalizes modality-isolated aggregation in \ourmethod{}, where prototypes are estimated from observed modality entries instead of artifacts created by missing entries. The distinction is most visible under missingness at the client level, where an entire client can lack one modality and placeholder aggregation would systematically attenuate the corresponding class prototype for that modality.

\begin{theorem}[Reliability weighting with uncertain synthesis]
Let $\hat{\mathbf{Z}}^{m}$ be a synthesized latent semantic representation and $\mathbf{G}$ be a graph-context representation for the same missing latent $\mathbf{Z}^{m}$. Consider $\mathbf{F}_{\alpha}^{m}=\alpha\hat{\mathbf{Z}}^{m}+(1-\alpha)\mathbf{G}$ with $\alpha\in[0,1]$. If both estimators are unbiased for $\mathbf{Z}^{m}$, their errors are uncorrelated, and their mean squared errors are $\nu_{\mathrm{syn}}$ and $\nu_g$, then the risk-minimizing weight on the synthesized representation is
\begin{equation}
\alpha^\star=\frac{\nu_g}{\nu_{\mathrm{syn}}+\nu_g}.
\end{equation}
\end{theorem}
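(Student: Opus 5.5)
The plan is to write the fused estimator's error as a convex combination of the two individual errors and minimize the resulting quadratic in $\alpha$. Set $\boldsymbol{\varepsilon}_{\mathrm{syn}}=\hat{\mathbf{Z}}^{m}-\mathbf{Z}^{m}$ and $\boldsymbol{\varepsilon}_{g}=\mathbf{G}-\mathbf{Z}^{m}$. Because the weights sum to one, $\mathbf{F}_{\alpha}^{m}-\mathbf{Z}^{m}=\alpha\boldsymbol{\varepsilon}_{\mathrm{syn}}+(1-\alpha)\boldsymbol{\varepsilon}_{g}$. This identity is exact and uses no assumption other than the affine form of $\mathbf{F}_{\alpha}^{m}$.

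Next we expand the risk:
\[
\mathbb{E}\|\mathbf{F}_{\alpha}^{m}-\mathbf{Z}^{m}\|_2^2=\alpha^2\nu_{\mathrm{syn}}+(1-\alpha)^2\nu_g+2\alpha(1-\alpha)\,\mathbb{E}\langle\boldsymbol{\varepsilon}_{\mathrm{syn}},\boldsymbol{\varepsilon}_{g}\rangle .
\]
The only point needing care is the cross term. We read ``errors are uncorrelated'' together with unbiasedness ($\mathbb{E}\boldsymbol{\varepsilon}_{\mathrm{syn}}=\mathbb{E}\boldsymbol{\varepsilon}_{g}=\mathbf{0}$) as saying that each coordinate covariance $\operatorname{Cov}((\boldsymbol{\varepsilon}_{\mathrm{syn}})_r,(\boldsymbol{\varepsilon}_{g})_r)$ vanishes. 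Summing over coordinates $r$ then gives $\mathbb{E}\langle\boldsymbol{\varepsilon}_{\mathrm{syn}},\boldsymbol{\varepsilon}_{g}\rangle=0$, so the risk reduces to $J(\alpha)=\alpha^2\nu_{\mathrm{syn}}+(1-\alpha)^2\nu_g$. Strictly, the argument only needs zero inner-product correlation, which is weaker than full coordinatewise uncorrelatedness, and I would note this.

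Finally we minimize $J$. It is a convex quadratic, strictly convex when $\nu_{\mathrm{syn}}+\nu_g>0$. Setting $J'(\alpha)=2\alpha\nu_{\mathrm{syn}}-2(1-\alpha)\nu_g=0$ gives $\alpha^\star=\nu_g/(\nu_{\mathrm{syn}}+\nu_g)$. This value lies in $[0,1]$, so the constraint $\alpha\in[0,1]$ is inactive and $\alpha^\star$ is the constrained minimizer as well. Substituting back gives the minimal risk $\nu_{\mathrm{syn}}\nu_g/(\nu_{\mathrm{syn}}+\nu_g)$, which is at most $\min\{\nu_{\mathrm{syn}},\nu_g\}$.

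The degenerate case $\nu_{\mathrm{syn}}=\nu_g=0$ must be excluded or handled separately: there every $\alpha$ is optimal and the formula is undefined, so we assume the denominator is positive. The main obstacle is interpretive rather than technical, namely stating precisely in what sense the errors are uncorrelated so that the cross term vanishes. Once that is fixed, the remainder is routine calculus. The result then justifies the gate $\tilde{\eta}_i^m$ in Eq.~\ref{eq:calibrated_candidate} as a learned surrogate for $\alpha^\star$, shifting weight toward $\mathbf{h}_i^g$ when synthesis error is large.
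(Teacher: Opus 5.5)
Your proposal is correct and follows the same route as the paper. Both write the fused error as $\alpha\boldsymbol{\epsilon}_{\mathrm{syn}}+(1-\alpha)\boldsymbol{\epsilon}_g$, drop the cross term by uncorrelatedness, and minimize $\alpha^2\nu_{\mathrm{syn}}+(1-\alpha)^2\nu_g$ by setting the derivative to zero. Your extra remarks are valid refinements that the paper leaves implicit: the cross term only requires $\mathbb{E}\langle\boldsymbol{\epsilon}_{\mathrm{syn}},\boldsymbol{\epsilon}_g\rangle=0$, the stationary point lies in $[0,1]$ so the constraint is inactive, and the degenerate case $\nu_{\mathrm{syn}}=\nu_g=0$ must be excluded.
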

\begin{proof}
Let $\boldsymbol{\epsilon}_{\mathrm{syn}}=\hat{\mathbf{Z}}^{m}-\mathbf{Z}^{m}$ and $\boldsymbol{\epsilon}_g=\mathbf{G}-\mathbf{Z}^{m}$. The corresponding fusion error can be written in the following form.
\begin{equation}
\begin{gathered}
\mathbf{F}_{\alpha}^{m}-\mathbf{Z}^{m}
=\alpha\boldsymbol{\epsilon}_{\mathrm{syn}}+(1-\alpha)\boldsymbol{\epsilon}_g,\\[-1mm]
\mathbb{E}\|\mathbf{F}_{\alpha}^{m}-\mathbf{Z}^{m}\|_2^2
=\alpha^2\nu_{\mathrm{syn}}+(1-\alpha)^2\nu_g ,
\end{gathered}
\end{equation}
where the cross term vanishes by the uncorrelated-error assumption made above. Differentiating with respect to $\alpha$ gives $\alpha\nu_{\mathrm{syn}}-(1-\alpha)\nu_g=0$, hence the stated $\alpha^\star$.
\end{proof}
Under these assumptions, the preferred weight on the synthesized representation decreases as synthesis error increases. Thus, regardless of whether missingness occurs at the client or node level, recovered latent semantic representations should be weighted according to uncertainty before they reach the task-specific readout. In \ourmethod{}, $\Delta_m$ serves as a compact signal of synthesis uncertainty that depends on class semantics during fusion.

\section{Experiments}
\label{sec:experiments}

This section evaluates \ourmethod{} and we organize the experiments around five research questions: \textbf{Q1:} Does \ourmethod{} outperform representative FGL and missing-modality FL baselines across graph and cross-modal tasks? \textbf{Q2:} How stable \ourmethod{} remains as the missing-modality rate changes? \textbf{Q3:} Which core components drive the performance gains.? \textbf{Q4:} How different hyperparameters setting influence the FedMGS performance? \textbf{Q5:} How efficient is FedMGS compared to baselines across graph and cross-modal tasks? The following setup first fixes the datasets, missingness protocol, baseline groups, and implementation details so that the comparisons are interpreted under a common evaluation protocol.

\subsection{Experimental Setup}
\label{sec:experimental-setup}

\paragraph{Datasets.}
We evaluate \ourmethod{} on multimodal graph datasets~\cite{DY_bili_ku,Movies_Grocery_toys,flickr30k} that span four tasks. Table~\ref{tab:dataset-summary} reports the processed graph statistics used in this paper, together with the tasks, metric, and missingness setting associated with each dataset.

\paragraph{Missingness simulation.}
We simulate two-granularity modality imbalance at missing rates $\rho\in\{0.3,0.5,0.7\}$. In Client-level missingness, each selected client loses either the visual or textual modality for all nodes in its local subgraph. 
In Node-level missingness, modality availability varies across nodes within a client to simulate the cross-client modality non-iid scenarios. 

\paragraph{Baselines.}
We compare against baselines chosen to test different sources of advantage. \textbf{FedAvg}~\cite{mcmahan2017fedavg}, \textbf{FedLap}~\cite{aliakbari2025fedlap}, and \textbf{FedGTA}~\cite{li2024fedgta} test whether standard FL or topology-aware FGL training is sufficient under missing modalities. \textbf{FedMAC}~\cite{nguyen2024fedmac}, \textbf{FedMVP}~\cite{che2024fedmvp}, \textbf{MH-pFLID}~\cite{xie2024mhpflid}, and \textbf{PEPSY}~\cite{nguyen2025pepsy} represent missing-modality or multimodal FL adaptations. \textbf{FedProto}~\cite{tan2022fedproto} isolates prototype-based FL without the graph-aware synthesis and reliability fusion used by \ourmethod{}. 
For compact tables, we omit the `-MM' suffix, but all baselines use the same data split, missingness protocol, and task metric.

\paragraph{Implementation details.}
We partition each dataset into either 5 or 10 clients using Louvain community detection~\cite{blondel2008louvain}. 
Training runs for $T=100$ communication rounds with $E=10$ local epochs per client.
We set baselines at their recommended setting or use Adam as the default optimizer.

\paragraph{Experiment environment.}
The experiments are conducted on a machine with an AMD EPYC 7J13 64-Core Processor, and NVIDIA GeForce RTX 4090 with 48GB memory and CUDA 12.6. The operating system is Ubuntu 22.04.5 LTS with 503GB memory.

\paragraph{Data and Code Availability.}
To ensure reproducibility, all datasets used in this study are publicly available. The complete source code for implementing the FedMGS framework is released publicly at \url{https://anonymous.4open.science/r/FedMGS-497C}.

\begin{table*}[t]
\caption{Comparison Test ($\rho=0.5$) for node classification and modality retrieval. Results are reported as mean $\pm$ std. Best results are in \textbf{bold}, and second-best results are underlined.}
\label{tab:comparison-rho05-a}
\centering
\scriptsize
\setlength{\tabcolsep}{1.8pt}
\renewcommand{\arraystretch}{0.9}
\resizebox{0.95\textwidth}{!}{%
\begin{tabular}{@{}lcccccccc@{}}
\toprule
\multirow{3}{*}{Methods} & \multicolumn{4}{c}{Node Classification (Accuracy)} & \multicolumn{4}{c}{Modality Retrieval (R@5)} \\
\cmidrule(lr){2-5}\cmidrule(l){6-9}
 & \multicolumn{2}{c}{Movies} & \multicolumn{2}{c}{Grocery} & \multicolumn{2}{c}{Toys} & \multicolumn{2}{c}{Flickr30k} \\
\cmidrule(lr){2-3}\cmidrule(lr){4-5}\cmidrule(lr){6-7}\cmidrule(l){8-9}
 & Client = 5 & Client = 10 & Client = 5 & Client = 10 & Client = 5 & Client = 10 & Client = 5 & Client = 10 \\
\midrule
FedAvg & $54.03 \pm 0.46$ & $48.80 \pm 0.56$ & $79.96 \pm 0.43$ & $76.61 \pm 0.46$ & $64.22 \pm 0.66$ & $62.46 \pm 0.71$ & $63.61 \pm 0.65$ & $61.32 \pm 0.64$ \\
FedProto & $53.45 \pm 0.46$ & $52.21 \pm 0.63$ & $78.42 \pm 0.36$ & $77.21 \pm 0.49$ & $62.47 \pm 0.67$ & $65.46 \pm 0.72$ & $65.38 \pm 0.60$ & $63.42 \pm 0.63$ \\
FedMAC & $53.12 \pm 0.46$ & $51.56 \pm 0.54$ & $81.29 \pm 0.33$ & \secondcell{$79.99 \pm 0.44$} & $62.83 \pm 0.64$ & $64.56 \pm 0.78$ & $66.32 \pm 0.57$ & $64.21 \pm 0.58$ \\
FedMVP & \secondcell{$55.99 \pm 0.50$} & \secondcell{$54.26 \pm 0.50$} & $79.99 \pm 0.41$ & $78.70 \pm 0.50$ & $65.42 \pm 0.66$ & $66.43 \pm 0.76$ & $65.43 \pm 0.62$ & $63.87 \pm 0.68$ \\
FedLap & $52.40 \pm 0.53$ & $50.85 \pm 0.67$ & $77.41 \pm 0.50$ & $77.30 \pm 0.53$ & $61.15 \pm 0.75$ & $64.25 \pm 0.77$ & $64.42 \pm 0.77$ & $62.54 \pm 0.74$ \\
FedGTA & $55.25 \pm 0.40$ & $52.32 \pm 0.54$ & $80.39 \pm 0.37$ & $78.82 \pm 0.48$ & $66.95 \pm 0.59$ & $68.76 \pm 0.72$ & $67.68 \pm 0.57$ & $64.78 \pm 0.60$ \\
MH-pFLID & $53.76 \pm 0.47$ & $52.13 \pm 0.63$ & $81.21 \pm 0.39$ & $78.64 \pm 0.41$ & $67.76 \pm 0.70$ & $70.21 \pm 0.69$ & \secondcell{$68.43 \pm 0.58$} & \secondcell{$65.32 \pm 0.72$} \\
PEPSY & $51.96 \pm 0.58$ & $50.55 \pm 0.52$ & \secondcell{$81.45 \pm 0.49$} & $79.12 \pm 0.47$ & \secondcell{$68.60 \pm 0.62$} & \secondcell{$70.87 \pm 0.67$} & $66.87 \pm 0.56$ & $64.54 \pm 0.70$ \\
\midrule
\rowcolor{FedMGSRow}
\textbf{\ourmethod{} (Ours)} & \accgain{$65.74 \pm 0.31$}{+17.41\%} & \accgain{$56.31 \pm 0.33$}{+3.78\%} & \accgain{$83.81 \pm 0.24$}{+2.90\%} & \accgain{$82.65 \pm 0.26$}{+3.33\%} & \accgain{$73.30 \pm 0.53$}{+6.85\%} & \accgain{$79.40 \pm 0.51$}{+12.04\%} & \accgain{$72.27 \pm 0.40$}{+5.61\%} & \accgain{$70.79 \pm 0.55$}{+8.37\%} \\
\bottomrule
\end{tabular}
}
\end{table*}
\subsection{Main Results (Answer for Q1)}
\label{sec:main-results}
\begin{table*}[t]
\caption{Comparison Test ($\rho=0.5$) for link prediction and modality match. Results are reported as mean $\pm$ std. Best results are in \textbf{bold}, and second-best results are underlined.}
\label{tab:comparison-rho05-b}
\centering
\scriptsize
\setlength{\tabcolsep}{1.8pt}
\renewcommand{\arraystretch}{0.9}
\resizebox{0.95\textwidth}{!}{%
\begin{tabular}{@{}lcccccccc@{}}
\toprule
\multirow{3}{*}{Methods} & \multicolumn{4}{c}{Link Prediction (AUC)} & \multicolumn{4}{c}{Modality Match (AUC)} \\
\cmidrule(lr){2-5}\cmidrule(l){6-9}
 & \multicolumn{2}{c}{DY} & \multicolumn{2}{c}{Bili Dance} & \multicolumn{2}{c}{KU} & \multicolumn{2}{c}{Bili Food} \\
\cmidrule(lr){2-3}\cmidrule(lr){4-5}\cmidrule(lr){6-7}\cmidrule(l){8-9}
 & Client = 5 & Client = 10 & Client = 5 & Client = 10 & Client = 5 & Client = 10 & Client = 5 & Client = 10 \\
\midrule
FedAvg & $74.79 \pm 0.43$ & $78.34 \pm 0.45$ & $76.16 \pm 0.47$ & $74.92 \pm 0.52$ & $73.31 \pm 0.61$ & $80.56 \pm 0.56$ & $71.61 \pm 0.49$ & $78.57 \pm 0.60$ \\
FedProto & $75.88 \pm 0.46$ & $75.79 \pm 0.44$ & $76.88 \pm 0.58$ & $74.93 \pm 0.55$ & $72.74 \pm 0.59$ & $77.85 \pm 0.67$ & $72.97 \pm 0.54$ & $79.54 \pm 0.59$ \\
FedMAC & $76.75 \pm 0.39$ & $77.70 \pm 0.35$ & $78.45 \pm 0.49$ & \secondcell{$76.74 \pm 0.56$} & $79.53 \pm 0.53$ & $81.15 \pm 0.63$ & $74.54 \pm 0.60$ & $79.35 \pm 0.59$ \\
FedMVP & $77.43 \pm 0.42$ & $78.69 \pm 0.45$ & $77.47 \pm 0.48$ & $75.47 \pm 0.56$ & $75.74 \pm 0.58$ & $79.98 \pm 0.64$ & $73.43 \pm 0.57$ & $76.78 \pm 0.58$ \\
FedLap & $74.98 \pm 0.47$ & $76.57 \pm 0.59$ & $78.48 \pm 0.61$ & $74.36 \pm 0.60$ & $75.22 \pm 0.72$ & $82.49 \pm 0.67$ & $72.63 \pm 0.70$ & $76.54 \pm 0.63$ \\
FedGTA & \secondcell{$78.42 \pm 0.44$} & \secondcell{$78.72 \pm 0.37$} & \secondcell{$79.75 \pm 0.52$} & $76.66 \pm 0.45$ & $78.54 \pm 0.59$ & $80.88 \pm 0.55$ & $73.82 \pm 0.53$ & \secondcell{$80.73 \pm 0.54$} \\
MH-pFLID & $76.03 \pm 0.39$ & $76.56 \pm 0.37$ & $79.32 \pm 0.46$ & $75.53 \pm 0.45$ & $76.87 \pm 0.65$ & $81.43 \pm 0.64$ & $74.34 \pm 0.53$ & $79.54 \pm 0.61$ \\
PEPSY & $76.45 \pm 0.46$ & $77.32 \pm 0.38$ & $78.64 \pm 0.51$ & $76.54 \pm 0.51$ & \secondcell{$79.84 \pm 0.54$} & \secondcell{$82.63 \pm 0.58$} & \secondcell{$75.41 \pm 0.52$} & $79.67 \pm 0.62$ \\
\midrule
\rowcolor{FedMGSRow}
\textbf{\ourmethod{} (Ours)} & \accgain{$79.90 \pm 0.29$}{+1.89\%} & \accgain{$79.82 \pm 0.18$}{+1.40\%} & \accgain{$82.46 \pm 0.37$}{+3.40\%} & \accgain{$79.39 \pm 0.31$}{+3.45\%} & \accgain{$83.19 \pm 0.42$}{+4.20\%} & \accgain{$85.92 \pm 0.51$}{+3.98\%} & \accgain{$80.34 \pm 0.44$}{+6.54\%} & \accgain{$85.32 \pm 0.43$}{+5.69\%} \\
\bottomrule
\end{tabular}
}
\end{table*}

To address \textbf{Q1}, Tables~\ref{tab:comparison-rho05-a} and~\ref{tab:comparison-rho05-b} compare \ourmethod{} with representative FGL, missing-modality FL, and prototype-based FL baselines at $\rho=0.5$. These tasks jointly test whether graph-aware latent semantic synthesis improves both graph-structured prediction and readouts that directly depend on cross-modal semantics.
In Table~\ref{tab:comparison-rho05-a}, \ourmethod{} achieves the best reported results for node classification and modality retrieval across the completed client settings. In node classification, the method improves Movies from the strongest baseline score of $55.99\%$ to $65.74\%$ under Client $=5$, and it also remains strongest on Grocery for both client counts. This pattern indicates that the synthesized latents are not only useful as modality placeholders, but also preserve label-discriminative information after graph propagation.
The modality retrieval columns provide a complementary view of the same representation quality. Retrieval directly tests whether the visual and textual latent spaces remain comparable after one modality has been recovered. \ourmethod{} obtains the best R@5 on Toys and Flickr30k, including $79.40$ on Toys under Client $=10$. The gains over missing-modality FL baselines suggest that graph-conditioned synthesis and federated modality prototypes are both needed. Notably, retrieval benefits from cross-client semantic anchors that keep the two modality spaces aligned.

In Table~\ref{tab:comparison-rho05-b}, \ourmethod{} achieves the strongest results on link prediction and modality matching. Link prediction evaluates whether recovered node semantics remain informative for pairwise structural inference, and the gains on DY and Bili Dance indicate that the representations learned by \ourmethod{} support edge-level prediction beyond node-level semantic recovery. The improvements are moderate yet consistent, which is reasonable because topology-aware FGL baselines already exploit structural information for this task.
Modality matching places the strongest pressure on cross-modal semantic consistency. On KU and Bili Food, \ourmethod{} achieves the best AUC in all reported client settings, including $85.92$ on KU under Client $=10$ and $85.32$ on Bili Food under Client $=10$. These results support the central design choice that missing-modality recovery should be conditioned jointly on local graph context and modality-specific federated prototypes, rather than relying only on local observed features or generic prototype regularization.
\begin{table}[!htbp]
\setlength{\abovecaptionskip}{2pt}
\setlength{\belowcaptionskip}{2pt}
\caption{Ablation at Client $=5$ and $\rho=0.5$. Movies/Acc. and Toys/R@5 match Table~\ref{tab:dataset-summary}, and variant names follow Secs.~\ref{sec:availability-aware-graph-encoding}--\ref{sec:reliability-calibrated-semantic-fusion}.}
\label{tab:ablation-study}
\centering
\tiny
\setlength{\tabcolsep}{1.6pt}
\renewcommand{\arraystretch}{1.02}
\resizebox{\columnwidth}{!}{%
\begin{tabular}{@{}ccccc@{}}
\toprule
\textbf{Variant} & \textbf{Movies Acc.} & \textbf{Discrepancy} & \textbf{Toys R@5} & \textbf{Discrepancy} \\
\midrule
w/o Avail\_Graph Encoding & $61.44 \pm 0.36$ & $4.30$ & $68.89 \pm 0.42$ & $4.41$ \\
w/o Proto\_Latent Synthesis & $62.43 \pm 0.43$ & $3.31$ & $70.30 \pm 0.46$ & $3.00$ \\
w/o Reliab\_Semantic Fusion & $63.36 \pm 0.37$ & $2.38$ & $71.82 \pm 0.36$ & $1.48$ \\
\midrule
\rowcolor{FedMGSRow}
\textbf{\ourmethod{} (Full)} & $\mathbf{65.74 \pm 0.31}$ & -- & $\mathbf{73.30 \pm 0.53}$ & -- \\
\bottomrule
\end{tabular}
}
\end{table}
\begin{table}[!htbp]
\caption{Comparison on per-round complexity between the shared baselines in Tables~\ref{tab:comparison-rho05-a}--\ref{tab:nodecls-k10-ref} and FedMGS.}
\label{tab:complexity-analysis}
\centering
\tiny
\setlength{\tabcolsep}{2pt}
\renewcommand{\arraystretch}{1.28}
\resizebox{0.95\columnwidth}{!}{%
\begin{tabular}{@{}cccc@{}}
\toprule
\makecell[c]{Method} & \makecell[c]{Extra client computation} & \makecell[c]{Extra server computation} & \makecell[c]{Extra communication} \\
\midrule
FedAvg
& \makecell[c]{None beyond local training}
& $\mathcal{O}(SP_{\theta})$
& $\mathcal{O}(P_{\theta})$ \\
FedProto
& $\mathcal{O}(nCd)$
& $\mathcal{O}(SP_{\theta}+SCd)$
& $\mathcal{O}(P_{\theta}+Cd)$ \\
FedMAC
& $\mathcal{O}(Mnd^2+M^2nd)$
& $\mathcal{O}(SP_{\theta})$
& $\mathcal{O}(P_{\theta})$ \\
FedMVP
& $\mathcal{O}(Mnd^2)$
& $\mathcal{O}(SP_{\theta})$
& $\mathcal{O}(P_{\theta})$ \\
FedLap
& \makecell[c]{$\mathcal{O}(L_g e d+nd^2)$\\plus spectral preprocessing}
& $\mathcal{O}(SP_{\theta})$
& $\mathcal{O}(P_{\theta})$ \\
FedGTA
& $\mathcal{O}(L_g e C+QnC)$
& $\mathcal{O}(S^2QC+S^2P_{\theta})$
& $\mathcal{O}(P_{\theta}+QC)$ \\
mh-pflid
& $\mathcal{O}(2L_g e d+2nd^2+rP_{\theta})$
& $\mathcal{O}(SrP_{\theta})$
& $\mathcal{O}(rP_{\theta})$ \\
Pepsy
& $\mathcal{O}(nd^2)$
& $\mathcal{O}(SP_{\theta})$
& $\mathcal{O}(P_{\theta})$ \\
\rowcolor{FedMGSRow}
\textbf{\ourmethod{} (Ours)}
& $\mathcal{O}(L_g e d+Mnd^2+nCd)$
& $\mathcal{O}(SP_{\theta}+SMCd+MCd)$
& $\mathcal{O}(P_{\theta}+MCd+MC)$ \\
\bottomrule
\end{tabular}
}
\vspace{-1cm}
\end{table}
\subsection{Robustness Analysis (Answer for Q2)}
\label{sec:robustness-analysis}

To address \textbf{Q2}, Table~\ref{tab:nodecls-k10-ref} reports representative results across missingness rates and task families. This stress test asks whether \ourmethod{} remains useful as incomplete modalities become milder or more severe under the task-specific Client-level or Node-level missingness protocol.

\begin{table*}[t]
\caption{Representative robustness results across missingness rates. Results are reported as mean $\pm$ std. Best results are in \textbf{bold}, and second-best results are underlined.}
\vspace{-0.3cm}
\label{tab:nodecls-k10-ref}
\centering
\scriptsize
\setlength{\tabcolsep}{1.15pt}
\renewcommand{\arraystretch}{1.15}
\resizebox{\textwidth}{!}{%
\begin{tabular}{@{}lcccccccccccc@{}}
\toprule
\multirow{2}{*}{Method} & \multicolumn{3}{c}{Movies (Acc)} & \multicolumn{3}{c}{DY (AUC)} & \multicolumn{3}{c}{KU (AUC)} & \multicolumn{3}{c}{Toys (R@5)} \\
\cmidrule(lr){2-4}\cmidrule(lr){5-7}\cmidrule(lr){8-10}\cmidrule(l){11-13}
 & $\rho=0.3$ & $\rho=0.5$ & $\rho=0.7$ & $\rho=0.3$ & $\rho=0.5$ & $\rho=0.7$ & $\rho=0.3$ & $\rho=0.5$ & $\rho=0.7$ & $\rho=0.3$ & $\rho=0.5$ & $\rho=0.7$ \\
\midrule
FedAvg & $55.20 \pm 0.58$ & $54.03 \pm 0.53$ & $52.40 \pm 0.76$ & $76.30 \pm 0.39$ & $74.79 \pm 0.43$ & $73.20 \pm 0.52$ & $75.00 \pm 0.57$ & $73.31 \pm 0.68$ & $71.60 \pm 0.84$ & $65.50 \pm 0.58$ & $64.22 \pm 0.70$ & $62.80 \pm 0.81$ \\
FedProto & $54.80 \pm 0.58$ & $53.45 \pm 0.55$ & $51.80 \pm 0.73$ & $77.10 \pm 0.41$ & $75.88 \pm 0.41$ & $74.60 \pm 0.67$ & $74.30 \pm 0.55$ & $72.74 \pm 0.61$ & $70.80 \pm 0.74$ & $63.80 \pm 0.72$ & $62.47 \pm 0.80$ & $61.10 \pm 0.95$ \\
FedMVP & \secondcell{$57.00 \pm 0.50$} & \secondcell{$55.99 \pm 0.55$} & $53.70 \pm 0.71$ & $78.20 \pm 0.34$ & $77.43 \pm 0.38$ & $76.00 \pm 0.52$ & $77.20 \pm 0.51$ & $75.74 \pm 0.64$ & $74.00 \pm 0.71$ & $66.90 \pm 0.57$ & $65.42 \pm 0.70$ & $63.80 \pm 0.85$ \\
FedMAC & $54.40 \pm 0.56$ & $53.12 \pm 0.55$ & $51.60 \pm 0.72$ & $77.80 \pm 0.44$ & $76.75 \pm 0.44$ & $75.30 \pm 0.57$ & $81.00 \pm 0.53$ & $79.53 \pm 0.65$ & $77.70 \pm 0.75$ & $64.10 \pm 0.61$ & $62.83 \pm 0.71$ & $61.50 \pm 0.92$ \\
FedLap & $53.10 \pm 0.56$ & $52.40 \pm 0.66$ & $50.90 \pm 0.74$ & $76.20 \pm 0.53$ & $74.98 \pm 0.56$ & $73.50 \pm 0.62$ & $77.00 \pm 0.67$ & $75.22 \pm 0.66$ & $73.80 \pm 0.83$ & $62.60 \pm 0.73$ & $61.15 \pm 0.79$ & $59.80 \pm 0.95$ \\
FedGTA & $56.20 \pm 0.45$ & $55.25 \pm 0.54$ & \secondcell{$53.90 \pm 0.75$} & \secondcell{$79.00 \pm 0.34$} & \secondcell{$78.42 \pm 0.42$} & \secondcell{$76.90 \pm 0.56$} & $80.00 \pm 0.53$ & $78.54 \pm 0.67$ & $76.90 \pm 0.81$ & $68.30 \pm 0.59$ & $66.95 \pm 0.74$ & $65.20 \pm 0.85$ \\
MH-pFLID & $54.90 \pm 0.59$ & $53.76 \pm 0.52$ & $52.30 \pm 0.65$ & $77.00 \pm 0.40$ & $76.03 \pm 0.43$ & $74.70 \pm 0.58$ & $78.40 \pm 0.63$ & $76.87 \pm 0.69$ & $75.30 \pm 0.83$ & $69.10 \pm 0.59$ & $67.76 \pm 0.79$ & $66.10 \pm 0.90$ \\
PEPSY & $53.40 \pm 0.47$ & $51.96 \pm 0.58$ & $50.70 \pm 0.74$ & $77.30 \pm 0.46$ & $76.45 \pm 0.53$ & $75.00 \pm 0.57$ & \secondcell{$81.20 \pm 0.59$} & \secondcell{$79.84 \pm 0.70$} & \secondcell{$78.20 \pm 0.72$} & \secondcell{$70.00 \pm 0.67$} & \secondcell{$68.60 \pm 0.78$} & \secondcell{$66.90 \pm 0.82$} \\
\midrule
\rowcolor{FedMGSRow}
\textbf{\ourmethod{} (Ours)} & \accgain{$68.95 \pm 0.35$}{+20.96\%} & \accgain{$65.74 \pm 0.44$}{+17.41\%} & \accgain{$63.53 \pm 0.60$}{+17.87\%} & \accgain{$80.12 \pm 0.21$}{+1.42\%} & \accgain{$79.90 \pm 0.30$}{+1.89\%} & \accgain{$78.32 \pm 0.45$}{+1.85\%} & \accgain{$84.21 \pm 0.56$}{+3.71\%} & \accgain{$83.19 \pm 0.45$}{+4.20\%} & \accgain{$81.45 \pm 0.67$}{+4.16\%} & \accgain{$74.24 \pm 0.51$}{+6.06\%} & \accgain{$73.30 \pm 0.54$}{+6.85\%} & \accgain{$71.23 \pm 0.72$}{+6.47\%} \\
\bottomrule
\end{tabular}
}
\end{table*}
\begin{figure*}[t]
  \centering
  \includegraphics[width=0.8\textwidth]{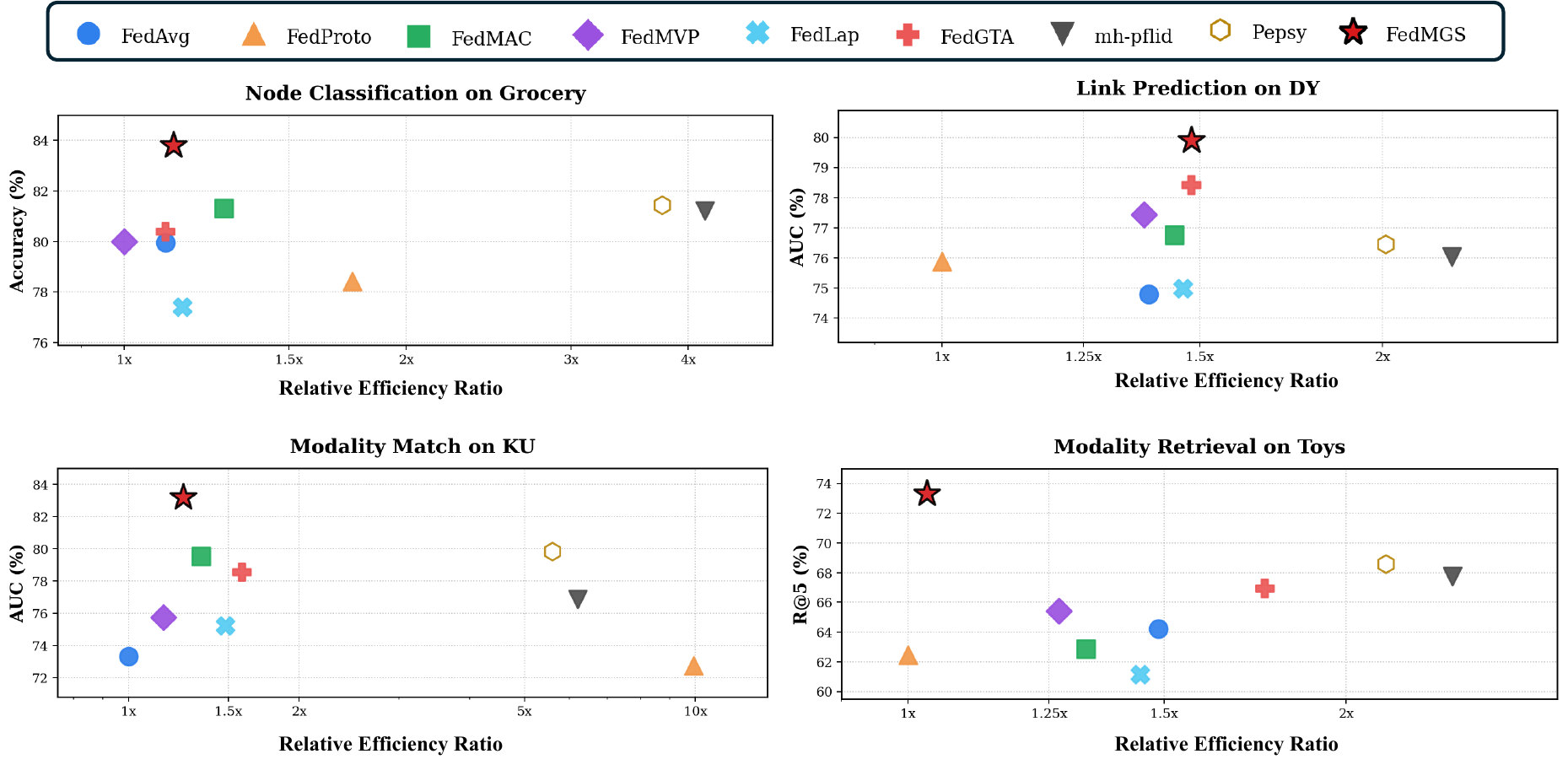}
  \vspace{-0.5cm}
  \caption{Efficiency-performance tradeoff across the four tasks. The relative efficiency ratio is normalized by the fastest-running baseline for each task, so lower ratios indicate less running-time overhead.}
  \label{fig:efficiency-test}
\end{figure*}

Across Table~\ref{tab:nodecls-k10-ref}, \ourmethod{} retains the best score in all reported representative columns from $\rho=0.3$ to $\rho=0.7$. The absolute scores generally decline as $\rho$ increases, but the relative advantage remains consistent across node classification, link prediction, modality matching, and modality retrieval. This pattern supports the intended robustness mechanism. Availability-aware encoding reduces contamination from absent modalities, while graph-conditioned synthesis and modality-specific prototypes provide usable semantic evidence when either local nodes or entire clients have incomplete modality observations.

The robustness trend is especially informative because the table mixes Client-level and Node-level missingness. Movies and Toys evaluate settings where selected clients lose an entire modality, so local training can become modality-biased if the model treats client observations as complete. DY and KU evaluate node-level incompleteness, where the model must avoid propagating invalid node attributes through local neighborhoods. Maintaining the best reported scores at $\rho=0.7$ on all four representative datasets suggests that \ourmethod{} handles both forms of missingness with the same recovery principle. Only observed modalities seed graph encoding, and missing latents are reconstructed from graph context plus class-modality prototypes.

\subsection{Ablation Study (Answer for Q3)}
\label{sec:ablation-study}

To address \textbf{Q3}, Table~\ref{tab:ablation-study} removes the three core components of \ourmethod{} under the same dataset-task-metric alignment as Table~\ref{tab:dataset-summary}. The short variant names denote availability-aware graph encoding (Avail\_Graph Encoding, Sec.~\ref{sec:availability-aware-graph-encoding}), prototype-guided latent semantic synthesis (Proto\_Latent Synthesis, Sec.~\ref{sec:prototype-guided-latent-semantic-synthesis}), and reliability-calibrated semantic fusion (Reliab\_Semantic Fusion, Sec.~\ref{sec:reliability-calibrated-semantic-fusion}). The discrepancy values are absolute drops from the full \ourmethod{} model.

Removing any component reduces performance on both reported client-level missingness settings, confirming that the three modules contribute complementary evidence. The largest drop occurs when Avail\_Graph Encoding is removed, which is consistent with the method design. If absent modality features enter propagation, the local graph encoder can spread unreliable semantics before synthesis begins. Removing Proto\_Latent Synthesis also causes clear degradation because clients with missing modalities lose access to cross-client class-modality anchors. Reliab\_Semantic Fusion has a smaller but still measurable effect, indicating that the recovered latent should not be treated as uniformly reliable even when the synthesis module is present.
\begin{figure}[t]
  \centering
  \includegraphics[width=\columnwidth]{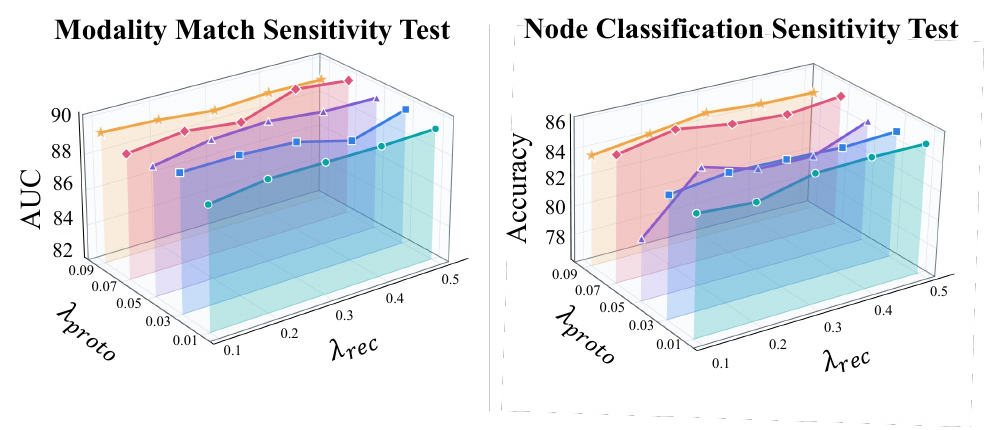}
  \caption{Hyperparameter sensitivity over Reconstruction loss weight, $\lambda_{\mathrm{rec}}$, and Prototype loss weight $\lambda_{\mathrm{rec}}$ for modality matching on KU and node classification on Grocery.}
  \label{fig:hyperparameter-sensitivity}
\end{figure}

\subsection{Hyperparameter Analysis (Answer for Q4)}
\label{sec:hyperparameter-analysis}

To address \textbf{Q4}, Figure~\ref{fig:hyperparameter-sensitivity} varies the reconstruction weight $\lambda_{\mathrm{rec}}$ and the prototype loss weight $\lambda_{\mathrm{proto}}$ for modality matching and node classification. The reconstruction weight controls how strongly synthesized latent representations are constrained by observed modality representations, while the prototype loss weight controls the strength of semantic alignment between local representations and federated prototype references.

\ourmethod{} changes smoothly under different hyperparameter settings, suggesting that its performance is not tied to a narrow tuning range. Stronger regions generally appear when prototype alignment receives a relatively large weight. Modality matching favors strong prototype alignment with a conservative-to-moderate reconstruction weight, whereas node classification benefits from strong prototype alignment together with moderate-to-larger reconstruction regularization. This trend is consistent with the design of \ourmethod{}: prototype-conditioned synthesis and prototype regularization stabilize cross-client semantics, while reconstruction keeps synthesized latents close to observed modality structure without overwhelming local graph context or task supervision.

The figure also provides a practical tuning guideline. Prototype-related regularization should remain active because it supplies cross-client semantic references used by synthesis and alignment, while $\lambda_{\mathrm{rec}}$ should be treated as a regularization weight rather than the main training signal. The smooth response surface in Figure~\ref{fig:hyperparameter-sensitivity} indicates that moderate-to-strong settings are sufficient without precise dataset-specific tuning.

\subsection{Efficiency Analysis (Answer for Q5)}
\label{sec:efficiency-analysis}

To address \textbf{Q5}, Figure~\ref{fig:efficiency-test} relates task performance to relative running-time overhead. For each task, the efficiency ratio is normalized by the fastest baseline, so smaller values indicate lower overhead within that task. Table~\ref{tab:complexity-analysis} complements this comparison with per-round complexity between FedMGS and other baselines, where $|V|$, $|E|$, $C$, and $d$ denote the numbers of local nodes, local edges, classes, and prototype dimensions, respectively.

The results indicate that \ourmethod{} obtains its accuracy gains with limited communication overhead relative to the baselines. Its additional cost mainly comes from local graph propagation and latent synthesis on clients, together with server-side aggregation of class-modality prototypes. The transmitted payload remains compact: clients send model parameters, prototype vectors ($2Cd$ scalars), observation counts ($2C$ scalars), and one sample count. Since these prototype statistics scale with classes and latent dimensions rather than local graph size, communication remains dominated by model parameters instead of raw data or synthesized explicit infomration, which further strengthen the privacy preservation needs. 

\section{Conclusion}

This paper studies modality-imbalanced MM-FGL, where clients train over multimodal graphs while visual or textual evidence may be absent at the client level or node level. FedMGS addresses this setting by framing missing modalities as graph-aware latent semantic synthesis rather than raw feature reconstruction. Its availability-aware graph encoder prevents unavailable modalities from entering message passing, its prototype-guided latent synthesizer imports class-modality semantics from federated prototype banks, and its reliability-calibrated fusion module regulates the influence of recovered latents before downstream readout. This design keeps raw features, topology, and node embeddings local while allowing clients to share compact semantic anchors.
The experimental results support the proposed design across graph-supervised and cross-modal tasks.FedMGS achieves the best reported results over four downstream tasks and the Robustness test indicates that the method remains effective as missingness becomes milder or more severe.
These findings indicate that graph-conditioned latent recovery is a practical solution. Future MM-FGL research can move beyond controlled missingness by studying dynamically evolving, and policy-constrained modalities in real deployments.

\section{GenAI Usage Disclosure}

Generative AI tools were used solely for language polishing, grammar checking, and improving the readability of the manuscript. They were not used for research ideation, method design, code generation, data processing, experiment execution, result analysis, or scientific claim generation. All research content, implementation, experiments, results, and conclusions were independently developed and verified by the authors. The authors take full responsibility for the content of this paper.
\bibliographystyle{ACM-Reference-Format}
\bibliography{references}


\begin{thebibliography}{48}


\ifx \showCODEN    \undefined \def \showCODEN     #1{\unskip}     \fi
\ifx \showISBNx    \undefined \def \showISBNx     #1{\unskip}     \fi
\ifx \showISBNxiii \undefined \def \showISBNxiii  #1{\unskip}     \fi
\ifx \showISSN     \undefined \def \showISSN      #1{\unskip}     \fi
\ifx \showLCCN     \undefined \def \showLCCN      #1{\unskip}     \fi
\ifx \shownote     \undefined \def \shownote      #1{#1}          \fi
\ifx \showarticletitle \undefined \def \showarticletitle #1{#1}   \fi
\ifx \showURL      \undefined \def \showURL       {\relax}        \fi
\providecommand\bibfield[2]{#2}
\providecommand\bibinfo[2]{#2}
\providecommand\natexlab[1]{#1}
\providecommand\showeprint[2][]{arXiv:#2}

\bibitem[Aliakbari et~al\mbox{.}(2025)]%
        {aliakbari2025fedlap}
\bibfield{author}{\bibinfo{person}{Javad Aliakbari}, \bibinfo{person}{Johan {\"O}stman}, \bibinfo{person}{Ashkan Panahi}, {and} \bibinfo{person}{Alexandre {Graell i Amat}}.} \bibinfo{year}{2025}\natexlab{}.
\newblock \showarticletitle{Subgraph Federated Learning via Spectral Methods}. In \bibinfo{booktitle}{\emph{Advances in Neural Information Processing Systems}}, Vol.~\bibinfo{volume}{38}.
\newblock
\urldef\tempurl%
\url{https://arxiv.org/abs/2510.25657}
\showURL{%
\tempurl}


\bibitem[Baek et~al\mbox{.}(2023)]%
        {baek2022fedpub}
\bibfield{author}{\bibinfo{person}{Jinheon Baek}, \bibinfo{person}{Wonyong Jeong}, \bibinfo{person}{Jiongdao Jin}, \bibinfo{person}{Jaehong Yoon}, {and} \bibinfo{person}{Sung~Ju Hwang}.} \bibinfo{year}{2023}\natexlab{}.
\newblock \showarticletitle{Personalized Subgraph Federated Learning}. In \bibinfo{booktitle}{\emph{Proceedings of the 40th International Conference on Machine Learning}}.
\newblock


\bibitem[Bang et~al\mbox{.}(2023)]%
        {bang2023app_gnn_bio1}
\bibfield{author}{\bibinfo{person}{Dongmin Bang}, \bibinfo{person}{Sangsoo Lim}, \bibinfo{person}{Sangseon Lee}, {and} \bibinfo{person}{Sun Kim}.} \bibinfo{year}{2023}\natexlab{}.
\newblock \showarticletitle{Biomedical knowledge graph learning for drug repurposing by extending guilt-by-association to multiple layers}.
\newblock \bibinfo{journal}{\emph{Nature Communications}} \bibinfo{volume}{14}, \bibinfo{number}{1} (\bibinfo{year}{2023}), \bibinfo{pages}{3570}.
\newblock


\bibitem[Blondel et~al\mbox{.}(2008)]%
        {blondel2008louvain}
\bibfield{author}{\bibinfo{person}{Vincent~D. Blondel}, \bibinfo{person}{Jean-Loup Guillaume}, \bibinfo{person}{Renaud Lambiotte}, {and} \bibinfo{person}{Etienne Lefebvre}.} \bibinfo{year}{2008}\natexlab{}.
\newblock \showarticletitle{Fast unfolding of communities in large networks}.
\newblock \bibinfo{journal}{\emph{Journal of Statistical Mechanics: Theory and Experiment}} \bibinfo{volume}{2008}, \bibinfo{number}{10} (\bibinfo{year}{2008}), \bibinfo{pages}{P10008}.
\newblock


\bibitem[Cai et~al\mbox{.}(2021)]%
        {cai2021link_prediction2}
\bibfield{author}{\bibinfo{person}{Lei Cai}, \bibinfo{person}{Jundong Li}, \bibinfo{person}{Jie Wang}, {and} \bibinfo{person}{Shuiwang Ji}.} \bibinfo{year}{2021}\natexlab{}.
\newblock \showarticletitle{Line Graph Neural Networks for Link Prediction}.
\newblock \bibinfo{journal}{\emph{IEEE Transactions on Pattern Analysis and Machine Intelligence}} (\bibinfo{year}{2021}).
\newblock


\bibitem[Cai et~al\mbox{.}(2023)]%
        {cai2023app_gnn_rec3}
\bibfield{author}{\bibinfo{person}{Xuheng Cai}, \bibinfo{person}{Chao Huang}, \bibinfo{person}{Lianghao Xia}, {and} \bibinfo{person}{Xubin Ren}.} \bibinfo{year}{2023}\natexlab{}.
\newblock \showarticletitle{LightGCL: Simple Yet Effective Graph Contrastive Learning for Recommendation}. In \bibinfo{booktitle}{\emph{International Conference on Learning Representations}}.
\newblock


\bibitem[Che et~al\mbox{.}(2024)]%
        {che2024fedmvp}
\bibfield{author}{\bibinfo{person}{Liwei Che}, \bibinfo{person}{Jiaqi Wang}, \bibinfo{person}{Xinyue Liu}, {and} \bibinfo{person}{Fenglong Ma}.} \bibinfo{year}{2024}\natexlab{}.
\newblock \bibinfo{title}{Leveraging Foundation Models for Multi-modal Federated Learning with Incomplete Modality}.
\newblock \bibinfo{howpublished}{arXiv preprint arXiv:2406.11048}.
\newblock
\href{https://doi.org/10.48550/arXiv.2406.11048}{doi:\nolinkurl{10.48550/arXiv.2406.11048}}


\bibitem[Chen et~al\mbox{.}(2021)]%
        {chen2021fedgl}
\bibfield{author}{\bibinfo{person}{Chuan Chen}, \bibinfo{person}{Weibo Hu}, \bibinfo{person}{Ziyue Xu}, {and} \bibinfo{person}{Zibin Zheng}.} \bibinfo{year}{2021}\natexlab{}.
\newblock \showarticletitle{FedGL: Federated Graph Learning Framework with Global Self-Supervision}.
\newblock \bibinfo{journal}{\emph{arXiv preprint arXiv:2105.03170}} (\bibinfo{year}{2021}).
\newblock


\bibitem[Chen et~al\mbox{.}(2026)]%
        {zhang2026stage}
\bibfield{author}{\bibinfo{person}{Zekai Chen}, \bibinfo{person}{Xun Wu}, \bibinfo{person}{Xunkai Li}, \bibinfo{person}{Yihan Sun}, \bibinfo{person}{Rong-Hua Li}, {and} \bibinfo{person}{Guoren Wang}.} \bibinfo{year}{2026}\natexlab{}.
\newblock \bibinfo{title}{STAGE: Tackling Semantic Drift in Multimodal Federated Graph Learning}.
\newblock \bibinfo{howpublished}{arXiv preprint arXiv:2605.11919}.
\newblock
\href{https://doi.org/10.48550/arXiv.2605.11919}{doi:\nolinkurl{10.48550/arXiv.2605.11919}}


\bibitem[Feng et~al\mbox{.}(2023)]%
        {feng2023fedmultimodal}
\bibfield{author}{\bibinfo{person}{Tiantian Feng}, \bibinfo{person}{Digbalay Bose}, \bibinfo{person}{Tuo Zhang}, \bibinfo{person}{Rajat Hebbar}, \bibinfo{person}{Anil Ramakrishna}, \bibinfo{person}{Rahul Gupta}, \bibinfo{person}{Mi Zhang}, \bibinfo{person}{Salman Avestimehr}, {and} \bibinfo{person}{Shrikanth Narayanan}.} \bibinfo{year}{2023}\natexlab{}.
\newblock \showarticletitle{FedMultimodal: A Benchmark for Multimodal Federated Learning}. In \bibinfo{booktitle}{\emph{Proceedings of the 29th ACM SIGKDD Conference on Knowledge Discovery and Data Mining}}. \bibinfo{publisher}{Association for Computing Machinery}, \bibinfo{address}{New York, NY, USA}, \bibinfo{pages}{4035--4045}.
\newblock
\href{https://doi.org/10.1145/3580305.3599825}{doi:\nolinkurl{10.1145/3580305.3599825}}


\bibitem[Fu et~al\mbox{.}(2022)]%
        {fu2022fgl_survey_1}
\bibfield{author}{\bibinfo{person}{Xingbo Fu}, \bibinfo{person}{Binchi Zhang}, \bibinfo{person}{Yushun Dong}, \bibinfo{person}{Chen Chen}, {and} \bibinfo{person}{Jundong Li}.} \bibinfo{year}{2022}\natexlab{}.
\newblock \showarticletitle{Federated graph machine learning: A survey of concepts, techniques, and applications}.
\newblock \bibinfo{journal}{\emph{ACM SIGKDD Explorations Newsletter}} \bibinfo{volume}{24}, \bibinfo{number}{2} (\bibinfo{year}{2022}), \bibinfo{pages}{32--47}.
\newblock


\bibitem[Hamilton et~al\mbox{.}(2017)]%
        {hamilton2017graphsage}
\bibfield{author}{\bibinfo{person}{Will Hamilton}, \bibinfo{person}{Zhitao Ying}, {and} \bibinfo{person}{Jure Leskovec}.} \bibinfo{year}{2017}\natexlab{}.
\newblock \showarticletitle{Inductive Representation Learning on Large Graphs}.
\newblock \bibinfo{journal}{\emph{Advances in Neural Information Processing Systems}} (\bibinfo{year}{2017}).
\newblock


\bibitem[He et~al\mbox{.}(2021)]%
        {he2021fedgraphnn}
\bibfield{author}{\bibinfo{person}{Chaoyang He}, \bibinfo{person}{Keshav Balasubramanian}, \bibinfo{person}{Emir Ceyani}, \bibinfo{person}{Carl Yang}, \bibinfo{person}{Han Xie}, \bibinfo{person}{Lichao Sun}, \bibinfo{person}{Lifang He}, \bibinfo{person}{Liangwei Yang}, \bibinfo{person}{Philip~S. Yu}, \bibinfo{person}{Yu Rong}, {et~al\mbox{.}}} \bibinfo{year}{2021}\natexlab{}.
\newblock \showarticletitle{FedGraphNN: A Federated Learning Benchmark System for Graph Neural Networks}. In \bibinfo{booktitle}{\emph{International Conference on Learning Representations Workshop on Distributed and Private Machine Learning}}.
\newblock


\bibitem[He et~al\mbox{.}(2025)]%
        {he2025unigraph2}
\bibfield{author}{\bibinfo{person}{Yufei He}, \bibinfo{person}{Yuan Sui}, \bibinfo{person}{Xiaoxin He}, \bibinfo{person}{Yue Liu}, \bibinfo{person}{Yifei Sun}, {and} \bibinfo{person}{Bryan Hooi}.} \bibinfo{year}{2025}\natexlab{}.
\newblock \bibinfo{title}{UniGraph2: Learning a Unified Embedding Space to Bind Multimodal Graphs}.
\newblock \bibinfo{howpublished}{arXiv preprint arXiv:2502.00806}.
\newblock
\href{https://doi.org/10.48550/arXiv.2502.00806}{doi:\nolinkurl{10.48550/arXiv.2502.00806}}


\bibitem[Hyun et~al\mbox{.}(2023)]%
        {hyun2023app_gnn_fina2}
\bibfield{author}{\bibinfo{person}{Woochang Hyun}, \bibinfo{person}{Jaehong Lee}, {and} \bibinfo{person}{Bongwon Suh}.} \bibinfo{year}{2023}\natexlab{}.
\newblock \showarticletitle{Anti-Money Laundering in Cryptocurrency via Multi-Relational Graph Neural Network}. In \bibinfo{booktitle}{\emph{Pacific-Asia Conference on Knowledge Discovery and Data Mining}}. Springer, \bibinfo{pages}{118--130}.
\newblock


\bibitem[Kipf and Welling(2017)]%
        {kipf2017gcn}
\bibfield{author}{\bibinfo{person}{Thomas~N. Kipf} {and} \bibinfo{person}{Max Welling}.} \bibinfo{year}{2017}\natexlab{}.
\newblock \bibinfo{title}{Semi-Supervised Classification with Graph Convolutional Networks}.
\newblock \bibinfo{howpublished}{International Conference on Learning Representations (ICLR)}.
\newblock
\urldef\tempurl%
\url{https://openreview.net/forum?id=SJU4ayYgl}
\showURL{%
\tempurl}


\bibitem[Le et~al\mbox{.}(2024)]%
        {le2025mfcpl}
\bibfield{author}{\bibinfo{person}{Huy~Q. Le}, \bibinfo{person}{Chu~Myaet Thwal}, \bibinfo{person}{Yu Qiao}, \bibinfo{person}{Ye~Lin Tun}, \bibinfo{person}{Minh N.~H. Nguyen}, {and} \bibinfo{person}{Choong~Seon Hong}.} \bibinfo{year}{2024}\natexlab{}.
\newblock \bibinfo{title}{Cross-Modal Prototype based Multimodal Federated Learning under Severely Missing Modality}.
\newblock \bibinfo{howpublished}{arXiv preprint arXiv:2401.13898}.
\newblock
\href{https://doi.org/10.48550/arXiv.2401.13898}{doi:\nolinkurl{10.48550/arXiv.2401.13898}}


\bibitem[Li et~al\mbox{.}(2026)]%
        {li2026mmopenfgl}
\bibfield{author}{\bibinfo{person}{Xunkai Li}, \bibinfo{person}{Yuming Ai}, \bibinfo{person}{Yinlin Zhu}, \bibinfo{person}{Haodong Lu}, \bibinfo{person}{Yi Zhang}, \bibinfo{person}{Guohao Fu}, \bibinfo{person}{Bowen Fan}, \bibinfo{person}{Qiangqiang Dai}, \bibinfo{person}{Rong-Hua Li}, {and} \bibinfo{person}{Guoren Wang}.} \bibinfo{year}{2026}\natexlab{}.
\newblock \bibinfo{title}{MM-OpenFGL: A Comprehensive Benchmark for Multimodal Federated Graph Learning}.
\newblock \bibinfo{howpublished}{arXiv preprint arXiv:2601.22416}.
\newblock
\href{https://doi.org/10.48550/arXiv.2601.22416}{doi:\nolinkurl{10.48550/arXiv.2601.22416}}


\bibitem[Li et~al\mbox{.}(2024a)]%
        {li2024adafgl}
\bibfield{author}{\bibinfo{person}{Xunkai Li}, \bibinfo{person}{Zhengyu Wu}, \bibinfo{person}{Wentao Zhang}, \bibinfo{person}{Henan Sun}, \bibinfo{person}{Rong-Hua Li}, {and} \bibinfo{person}{Guoren Wang}.} \bibinfo{year}{2024}\natexlab{a}.
\newblock \bibinfo{title}{AdaFGL: A New Paradigm for Federated Node Classification with Topology Heterogeneity}.
\newblock \bibinfo{howpublished}{arXiv preprint arXiv:2401.11750}.
\newblock
\href{https://doi.org/10.48550/arXiv.2401.11750}{doi:\nolinkurl{10.48550/arXiv.2401.11750}}


\bibitem[Li et~al\mbox{.}(2024b)]%
        {li2024fedgta}
\bibfield{author}{\bibinfo{person}{Xunkai Li}, \bibinfo{person}{Zhengyu Wu}, \bibinfo{person}{Wentao Zhang}, \bibinfo{person}{Yinlin Zhu}, \bibinfo{person}{Rong-Hua Li}, {and} \bibinfo{person}{Guoren Wang}.} \bibinfo{year}{2024}\natexlab{b}.
\newblock \showarticletitle{FedGTA: Topology-Aware Averaging for Federated Graph Learning}.
\newblock \bibinfo{journal}{\emph{Proceedings of the VLDB Endowment}} \bibinfo{volume}{17}, \bibinfo{number}{1} (\bibinfo{year}{2024}), \bibinfo{pages}{41--50}.
\newblock


\bibitem[McMahan et~al\mbox{.}(2017)]%
        {mcmahan2017fedavg}
\bibfield{author}{\bibinfo{person}{H.~Brendan McMahan}, \bibinfo{person}{Eider Moore}, \bibinfo{person}{Daniel Ramage}, \bibinfo{person}{Seth Hampson}, {and} \bibinfo{person}{Blaise~Ag{\"u}era y Arcas}.} \bibinfo{year}{2017}\natexlab{}.
\newblock \showarticletitle{Communication-Efficient Learning of Deep Networks from Decentralized Data}. In \bibinfo{booktitle}{\emph{Proceedings of the 20th International Conference on Artificial Intelligence and Statistics}} \emph{(\bibinfo{series}{Proceedings of Machine Learning Research}, Vol.~\bibinfo{volume}{54})}. \bibinfo{publisher}{PMLR}, \bibinfo{address}{Fort Lauderdale, FL, USA}, \bibinfo{pages}{1273--1282}.
\newblock
\urldef\tempurl%
\url{https://proceedings.mlr.press/v54/mcmahan17a.html}
\showURL{%
\tempurl}


\bibitem[Nguyen et~al\mbox{.}(2024)]%
        {nguyen2024fedmac}
\bibfield{author}{\bibinfo{person}{Manh~Duong Nguyen}, \bibinfo{person}{Trung~Thanh Nguyen}, \bibinfo{person}{Huy~Hieu Pham}, \bibinfo{person}{Trong~Nghia Hoang}, \bibinfo{person}{Phi~Le Nguyen}, {and} \bibinfo{person}{Thanh~Trung Huynh}.} \bibinfo{year}{2024}\natexlab{}.
\newblock \bibinfo{title}{FedMAC: Tackling Partial-Modality Missing in Federated Learning with Cross-Modal Aggregation and Contrastive Regularization}.
\newblock \bibinfo{howpublished}{arXiv preprint arXiv:2410.03070}.
\newblock
\href{https://doi.org/10.48550/arXiv.2410.03070}{doi:\nolinkurl{10.48550/arXiv.2410.03070}}


\bibitem[Nguyen et~al\mbox{.}(2025)]%
        {nguyen2025pepsy}
\bibfield{author}{\bibinfo{person}{Tan Nguyen} {et~al\mbox{.}}} \bibinfo{year}{2025}\natexlab{}.
\newblock \bibinfo{title}{PEPSY: Privacy-Preserving Embedding Controls for Heterogeneous Missing Modalities}.
\newblock \bibinfo{howpublished}{Preprint}.
\newblock


\bibitem[Ni et~al\mbox{.}(2019)]%
        {Movies_Grocery_toys}
\bibfield{author}{\bibinfo{person}{Jianmo Ni}, \bibinfo{person}{Jiacheng Li}, {and} \bibinfo{person}{Julian~J. McAuley}.} \bibinfo{year}{2019}\natexlab{}.
\newblock \showarticletitle{Justifying Recommendations using Distantly-Labeled Reviews and Fine-Grained Aspects}. In \bibinfo{booktitle}{\emph{{EMNLP/IJCNLP} {(1)}}}. \bibinfo{publisher}{Association for Computational Linguistics}, \bibinfo{pages}{188--197}.
\newblock


\bibitem[Pan et~al\mbox{.}(2022)]%
        {pan2022fedapp_gnn_fina1}
\bibfield{author}{\bibinfo{person}{Zhirui Pan}, \bibinfo{person}{Guangzhong Wang}, \bibinfo{person}{Zhaoning Li}, \bibinfo{person}{Lifeng Chen}, \bibinfo{person}{Yang Bian}, {and} \bibinfo{person}{Zhongyuan Lai}.} \bibinfo{year}{2022}\natexlab{}.
\newblock \showarticletitle{2SFGL: A Simple And Robust Protocol For Graph-Based Fraud Detection}. In \bibinfo{booktitle}{\emph{2022 IEEE International Conference on Cloud Computing Technology and Science}}. IEEE, \bibinfo{pages}{194--201}.
\newblock


\bibitem[Peng et~al\mbox{.}(2024)]%
        {peng2024fedmm}
\bibfield{author}{\bibinfo{person}{Yuanzhe Peng}, \bibinfo{person}{Jieming Bian}, {and} \bibinfo{person}{Jie Xu}.} \bibinfo{year}{2024}\natexlab{}.
\newblock \bibinfo{title}{FedMM: Federated Multi-Modal Learning with Modality Heterogeneity in Computational Pathology}.
\newblock \bibinfo{howpublished}{arXiv preprint arXiv:2402.15858}.
\newblock
\href{https://doi.org/10.48550/arXiv.2402.15858}{doi:\nolinkurl{10.48550/arXiv.2402.15858}}


\bibitem[Plummer et~al\mbox{.}(2015)]%
        {flickr30k}
\bibfield{author}{\bibinfo{person}{Bryan~A. Plummer}, \bibinfo{person}{Liwei Wang}, \bibinfo{person}{Chris~M. Cervantes}, \bibinfo{person}{Juan~C. Caicedo}, \bibinfo{person}{Julia Hockenmaier}, {and} \bibinfo{person}{Svetlana Lazebnik}.} \bibinfo{year}{2015}\natexlab{}.
\newblock \showarticletitle{Flickr30k Entities: Collecting Region-to-Phrase Correspondences for Richer Image-to-Sentence Models}. In \bibinfo{booktitle}{\emph{{ICCV}}}. \bibinfo{publisher}{{IEEE} Computer Society}, \bibinfo{pages}{2641--2649}.
\newblock


\bibitem[Tan et~al\mbox{.}(2023)]%
        {tan2023fedstar}
\bibfield{author}{\bibinfo{person}{Yue Tan}, \bibinfo{person}{Yixin Liu}, \bibinfo{person}{Guodong Long}, \bibinfo{person}{Jing Jiang}, \bibinfo{person}{Qinghua Lu}, {and} \bibinfo{person}{Chengqi Zhang}.} \bibinfo{year}{2023}\natexlab{}.
\newblock \showarticletitle{Federated Learning on Non-IID Graphs via Structural Knowledge Sharing}.
\newblock \bibinfo{journal}{\emph{Proceedings of the AAAI Conference on Artificial Intelligence}} \bibinfo{volume}{37}, \bibinfo{number}{8} (\bibinfo{year}{2023}), \bibinfo{pages}{9953--9961}.
\newblock
\href{https://doi.org/10.1609/AAAI.V37I8.26187}{doi:\nolinkurl{10.1609/AAAI.V37I8.26187}}


\bibitem[Tan et~al\mbox{.}(2022)]%
        {tan2022fedproto}
\bibfield{author}{\bibinfo{person}{Yue Tan}, \bibinfo{person}{Guodong Long}, \bibinfo{person}{Lu Liu}, \bibinfo{person}{Tianyi Zhou}, \bibinfo{person}{Qinghua Lu}, \bibinfo{person}{Jing Jiang}, {and} \bibinfo{person}{Chengqi Zhang}.} \bibinfo{year}{2022}\natexlab{}.
\newblock \showarticletitle{FedProto: Federated Prototype Learning across Heterogeneous Clients}.
\newblock \bibinfo{journal}{\emph{Proceedings of the AAAI Conference on Artificial Intelligence}} \bibinfo{volume}{36}, \bibinfo{number}{8} (\bibinfo{year}{2022}), \bibinfo{pages}{8432--8440}.
\newblock
\href{https://doi.org/10.1609/aaai.v36i8.20819}{doi:\nolinkurl{10.1609/aaai.v36i8.20819}}


\bibitem[Tang et~al\mbox{.}(2024)]%
        {Social_recomm}
\bibfield{author}{\bibinfo{person}{Mingwei Tang}, \bibinfo{person}{Meng Liu}, \bibinfo{person}{Hong Li}, \bibinfo{person}{Junjie Yang}, \bibinfo{person}{Chenglin Wei}, \bibinfo{person}{Boyang Li}, \bibinfo{person}{Dai Li}, \bibinfo{person}{Rengan Xu}, \bibinfo{person}{Yifan Xu}, \bibinfo{person}{Zehua Zhang}, \bibinfo{person}{Xiangyu Wang}, \bibinfo{person}{Linfeng Liu}, \bibinfo{person}{Yuelei Xie}, \bibinfo{person}{Chengye Liu}, \bibinfo{person}{Labib Fawaz}, \bibinfo{person}{Li Li}, \bibinfo{person}{Hongnan Wang}, \bibinfo{person}{Bill Zhu}, {and} \bibinfo{person}{Sri Reddy}.} \bibinfo{year}{2024}\natexlab{}.
\newblock \bibinfo{title}{Async Learned User Embeddings for Ads Delivery Optimization}.
\newblock
\showeprint[arxiv]{2406.05898}~[cs.IR]
\urldef\tempurl%
\url{https://arxiv.org/abs/2406.05898}
\showURL{%
\tempurl}


\bibitem[Tao et~al\mbox{.}(2020)]%
        {tao2020mgat}
\bibfield{author}{\bibinfo{person}{Zhulin Tao}, \bibinfo{person}{Yinwei Wei}, \bibinfo{person}{Xiang Wang}, \bibinfo{person}{Xiangnan He}, \bibinfo{person}{Xianglin Huang}, {and} \bibinfo{person}{Tat-Seng Chua}.} \bibinfo{year}{2020}\natexlab{}.
\newblock \showarticletitle{MGAT: Multimodal Graph Attention Network for Recommendation}.
\newblock \bibinfo{journal}{\emph{Information Processing \& Management}} \bibinfo{volume}{57}, \bibinfo{number}{5} (\bibinfo{year}{2020}), \bibinfo{pages}{102277}.
\newblock
\href{https://doi.org/10.1016/j.ipm.2020.102277}{doi:\nolinkurl{10.1016/j.ipm.2020.102277}}


\bibitem[Veli{\v{c}}kovi{\'c} et~al\mbox{.}(2018)]%
        {velickovic2018gat}
\bibfield{author}{\bibinfo{person}{Petar Veli{\v{c}}kovi{\'c}}, \bibinfo{person}{Guillem Cucurull}, \bibinfo{person}{Arantxa Casanova}, \bibinfo{person}{Adriana Romero}, \bibinfo{person}{Pietro Li{\`o}}, {and} \bibinfo{person}{Yoshua Bengio}.} \bibinfo{year}{2018}\natexlab{}.
\newblock \showarticletitle{Graph Attention Networks}. In \bibinfo{booktitle}{\emph{ICLR}}.
\newblock


\bibitem[Wan et~al\mbox{.}(2024)]%
        {wan2024fgl_fggp}
\bibfield{author}{\bibinfo{person}{Guancheng Wan}, \bibinfo{person}{Wenke Huang}, {and} \bibinfo{person}{Mang Ye}.} \bibinfo{year}{2024}\natexlab{}.
\newblock \showarticletitle{Federated Graph Learning under Domain Shift with Generalizable Prototypes}. In \bibinfo{booktitle}{\emph{Proceedings of the AAAI Conference on Artificial Intelligence}}, Vol.~\bibinfo{volume}{38}. \bibinfo{pages}{15429--15437}.
\newblock


\bibitem[Wang et~al\mbox{.}(2022)]%
        {WangFedScope_22_fsg}
\bibfield{author}{\bibinfo{person}{Zhen Wang}, \bibinfo{person}{Weirui Kuang}, \bibinfo{person}{Yuexiang Xie}, \bibinfo{person}{Liuyi Yao}, \bibinfo{person}{Yaliang Li}, \bibinfo{person}{Bolin Ding}, {and} \bibinfo{person}{Jingren Zhou}.} \bibinfo{year}{2022}\natexlab{}.
\newblock \showarticletitle{FederatedScope-GNN: Towards a Unified, Comprehensive and Efficient Package for Federated Graph Learning}. In \bibinfo{booktitle}{\emph{Proceedings of the 28th ACM SIGKDD Conference on Knowledge Discovery and Data Mining}}. \bibinfo{pages}{4110--4120}.
\newblock


\bibitem[Wei et~al\mbox{.}(2019)]%
        {wei2019mmgcn}
\bibfield{author}{\bibinfo{person}{Yinwei Wei}, \bibinfo{person}{Xiang Wang}, \bibinfo{person}{Liqiang Nie}, \bibinfo{person}{Xiangnan He}, \bibinfo{person}{Richang Hong}, {and} \bibinfo{person}{Tat-Seng Chua}.} \bibinfo{year}{2019}\natexlab{}.
\newblock \showarticletitle{MMGCN: Multi-modal Graph Convolution Network for Personalized Recommendation of Micro-video}. In \bibinfo{booktitle}{\emph{Proceedings of the 27th ACM International Conference on Multimedia}}. \bibinfo{publisher}{Association for Computing Machinery}, \bibinfo{address}{New York, NY, USA}, \bibinfo{pages}{1437--1445}.
\newblock
\href{https://doi.org/10.1145/3343031.3351034}{doi:\nolinkurl{10.1145/3343031.3351034}}


\bibitem[Wu et~al\mbox{.}(2023)]%
        {wufederated2023fedapp_gnn_bio2}
\bibfield{author}{\bibinfo{person}{Xiaotong Wu}, \bibinfo{person}{Jiaquan Gao}, \bibinfo{person}{Muhammad Bilal}, \bibinfo{person}{Fei Dai}, \bibinfo{person}{Xiaolong Xu}, \bibinfo{person}{Lianyong Qi}, {and} \bibinfo{person}{Wanchun Dou}.} \bibinfo{year}{2023}\natexlab{}.
\newblock \showarticletitle{Federated learning-based private medical knowledge graph for epidemic surveillance in internet of things}.
\newblock \bibinfo{journal}{\emph{Expert Systems}} (\bibinfo{year}{2023}), \bibinfo{pages}{e13372}.
\newblock


\bibitem[Wu et~al\mbox{.}(2020)]%
        {wu2020gnn_survey1}
\bibfield{author}{\bibinfo{person}{Zonghan Wu}, \bibinfo{person}{Shirui Pan}, \bibinfo{person}{Fengwen Chen}, \bibinfo{person}{Guodong Long}, \bibinfo{person}{Chengqi Zhang}, {and} \bibinfo{person}{Philip~S. Yu}.} \bibinfo{year}{2020}\natexlab{}.
\newblock \showarticletitle{A Comprehensive Survey on Graph Neural Networks}.
\newblock \bibinfo{journal}{\emph{IEEE Transactions on Neural Networks and Learning Systems}} \bibinfo{volume}{32}, \bibinfo{number}{1} (\bibinfo{year}{2020}), \bibinfo{pages}{4--24}.
\newblock


\bibitem[Xie et~al\mbox{.}(2021)]%
        {xie2021gcfl}
\bibfield{author}{\bibinfo{person}{Han Xie}, \bibinfo{person}{Jing Ma}, \bibinfo{person}{Li Xiong}, {and} \bibinfo{person}{Carl Yang}.} \bibinfo{year}{2021}\natexlab{}.
\newblock \bibinfo{title}{Federated Graph Classification over Non-IID Graphs}.
\newblock \bibinfo{howpublished}{arXiv preprint arXiv:2106.13423}.
\newblock
\href{https://doi.org/10.48550/arXiv.2106.13423}{doi:\nolinkurl{10.48550/arXiv.2106.13423}}


\bibitem[Xie et~al\mbox{.}(2024)]%
        {xie2024mhpflid}
\bibfield{author}{\bibinfo{person}{Liang Xie}, \bibinfo{person}{Ming Lin}, \bibinfo{person}{Tuan Luan}, \bibinfo{person}{Chao Li}, \bibinfo{person}{Yixuan Fang}, \bibinfo{person}{Qitao Shen}, {and} \bibinfo{person}{Zongwei Wu}.} \bibinfo{year}{2024}\natexlab{}.
\newblock \showarticletitle{MH-pFLID: Model Heterogeneous Personalized Federated Learning via Injection and Distillation for Medical Data Analysis}.
\newblock \bibinfo{journal}{\emph{arXiv preprint arXiv:2405.06822}} (\bibinfo{year}{2024}).
\newblock


\bibitem[Xu et~al\mbox{.}(2019)]%
        {xu2018gin}
\bibfield{author}{\bibinfo{person}{Keyulu Xu}, \bibinfo{person}{Weihua Hu}, \bibinfo{person}{Jure Leskovec}, {and} \bibinfo{person}{Stefanie Jegelka}.} \bibinfo{year}{2019}\natexlab{}.
\newblock \showarticletitle{How Powerful are Graph Neural Networks?}
\newblock \bibinfo{journal}{\emph{International Conference on Learning Representations}} (\bibinfo{year}{2019}).
\newblock


\bibitem[Yao et~al\mbox{.}(2024)]%
        {yao2024fgl_fedgcn}
\bibfield{author}{\bibinfo{person}{Yuhang Yao}, \bibinfo{person}{Weizhao Jin}, \bibinfo{person}{Srivatsan Ravi}, {and} \bibinfo{person}{Carlee Joe-Wong}.} \bibinfo{year}{2024}\natexlab{}.
\newblock \showarticletitle{FedGCN: Convergence-Communication Tradeoffs in Federated Training of Graph Convolutional Networks}.
\newblock \bibinfo{journal}{\emph{Advances in Neural Information Processing Systems}}  \bibinfo{volume}{36} (\bibinfo{year}{2024}).
\newblock


\bibitem[Zhang et~al\mbox{.}(2021a)]%
        {zhang2021fgl_survey_2}
\bibfield{author}{\bibinfo{person}{Huanding Zhang}, \bibinfo{person}{Tao Shen}, \bibinfo{person}{Fei Wu}, \bibinfo{person}{Mingyang Yin}, \bibinfo{person}{Hongxia Yang}, {and} \bibinfo{person}{Chao Wu}.} \bibinfo{year}{2021}\natexlab{a}.
\newblock \showarticletitle{Federated Graph Learning--A Position Paper}.
\newblock \bibinfo{journal}{\emph{arXiv preprint arXiv:2105.11099}} (\bibinfo{year}{2021}).
\newblock


\bibitem[Zhang et~al\mbox{.}(2025)]%
        {DY_bili_ku}
\bibfield{author}{\bibinfo{person}{Jiaqi Zhang}, \bibinfo{person}{Yu Cheng}, \bibinfo{person}{Yongxin Ni}, \bibinfo{person}{Yunzhu Pan}, \bibinfo{person}{Zheng Yuan}, \bibinfo{person}{Junchen Fu}, \bibinfo{person}{Youhua Li}, \bibinfo{person}{Jie Wang}, {and} \bibinfo{person}{Fajie Yuan}.} \bibinfo{year}{2025}\natexlab{}.
\newblock \showarticletitle{NineRec: {A} Benchmark Dataset Suite for Evaluating Transferable Recommendation}.
\newblock \bibinfo{journal}{\emph{{IEEE} Trans. Pattern Anal. Mach. Intell.}} \bibinfo{volume}{47}, \bibinfo{number}{7} (\bibinfo{year}{2025}), \bibinfo{pages}{5256--5267}.
\newblock


\bibitem[Zhang et~al\mbox{.}(2021b)]%
        {zhang2021fedsage}
\bibfield{author}{\bibinfo{person}{Ke Zhang}, \bibinfo{person}{Carl Yang}, \bibinfo{person}{Xiaoxiao Li}, \bibinfo{person}{Lichao Sun}, {and} \bibinfo{person}{Siu-Ming Yiu}.} \bibinfo{year}{2021}\natexlab{b}.
\newblock \showarticletitle{Subgraph Federated Learning with Missing Neighbor Generation}. In \bibinfo{booktitle}{\emph{Advances in Neural Information Processing Systems}}, Vol.~\bibinfo{volume}{34}. \bibinfo{publisher}{Curran Associates, Inc.}, \bibinfo{address}{Red Hook, NY, USA}, \bibinfo{pages}{6671--6682}.
\newblock
\urldef\tempurl%
\url{https://proceedings.neurips.cc/paper/2021/hash/34adeb8e3242824038aa65460a47c29e-Abstract.html}
\showURL{%
\tempurl}


\bibitem[Zhang and Chen(2018)]%
        {Zhang18link_prediction1}
\bibfield{author}{\bibinfo{person}{Muhan Zhang} {and} \bibinfo{person}{Yixin Chen}.} \bibinfo{year}{2018}\natexlab{}.
\newblock \showarticletitle{Link Prediction Based on Graph Neural Networks}.
\newblock \bibinfo{journal}{\emph{Advances in Neural Information Processing Systems}} (\bibinfo{year}{2018}).
\newblock


\bibitem[Zhou et~al\mbox{.}(2022)]%
        {zhou2022gnn_survey2}
\bibfield{author}{\bibinfo{person}{Yu Zhou}, \bibinfo{person}{Haixia Zheng}, \bibinfo{person}{Xin Huang}, \bibinfo{person}{Shufeng Hao}, \bibinfo{person}{Dengao Li}, {and} \bibinfo{person}{Jumin Zhao}.} \bibinfo{year}{2022}\natexlab{}.
\newblock \showarticletitle{Graph Neural Networks: Taxonomy, Advances, and Trends}.
\newblock \bibinfo{journal}{\emph{ACM Transactions on Intelligent Systems and Technology}} \bibinfo{volume}{13}, \bibinfo{number}{1} (\bibinfo{year}{2022}), \bibinfo{pages}{1--54}.
\newblock


\bibitem[Zhu et~al\mbox{.}(2025)]%
        {zhu2025mmgraph}
\bibfield{author}{\bibinfo{person}{Jing Zhu}, \bibinfo{person}{Yuhang Zhou}, \bibinfo{person}{Shengyi Qian}, \bibinfo{person}{Zhongmou He}, \bibinfo{person}{Tong Zhao}, \bibinfo{person}{Neil Shah}, {and} \bibinfo{person}{Danai Koutra}.} \bibinfo{year}{2025}\natexlab{}.
\newblock \showarticletitle{Mosaic of Modalities: A Comprehensive Benchmark for Multimodal Graph Learning}. In \bibinfo{booktitle}{\emph{Proceedings of the IEEE/CVF Conference on Computer Vision and Pattern Recognition}}. \bibinfo{publisher}{IEEE}, \bibinfo{address}{Piscataway, NJ, USA}, \bibinfo{pages}{14215--14224}.
\newblock
\href{https://doi.org/10.1109/CVPR52734.2025.01326}{doi:\nolinkurl{10.1109/CVPR52734.2025.01326}}


\bibitem[Zhu et~al\mbox{.}(2024)]%
        {zhu2024fedtad}
\bibfield{author}{\bibinfo{person}{Yinlin Zhu}, \bibinfo{person}{Xunkai Li}, \bibinfo{person}{Zhengyu Wu}, \bibinfo{person}{Di Wu}, \bibinfo{person}{Miao Hu}, {and} \bibinfo{person}{Rong-Hua Li}.} \bibinfo{year}{2024}\natexlab{}.
\newblock \showarticletitle{FedTAD: Topology-aware Data-free Knowledge Distillation for Subgraph Federated Learning}.
\newblock \bibinfo{journal}{\emph{arXiv preprint arXiv:2404.14061}} (\bibinfo{year}{2024}).
\newblock


\end{thebibliography}

\end{document}